\newtheorem{theorem}{Theorem}
\newtheorem{definition}{Definition}
\newtheorem{proposition}{Proposition}
\newtheorem{lemma}{Lemma}
\newtheorem{assumption}{Assumption}
\def\eqref#1{Eq.~(\ref{#1})}
\def\1{\bm{1}}
\newcommand{\defeq}{\vcentcolon=}
\DeclareMathAlphabet{\mathsfit}{\encodingdefault}{\sfdefault}{m}{sl}
\SetMathAlphabet{\mathsfit}{bold}{\encodingdefault}{\sfdefault}{bx}{n}
\def\gA{{\mathcal{A}}}
\def\gB{{\mathcal{B}}}
\def\gD{{\mathcal{D}}}
\def\gM{{\mathcal{M}}}
\def\gS{{\mathcal{S}}}
\def\gT{{\mathcal{T}}}
\def\gX{{\mathcal{X}}}
\def\sR{{\mathbb{R}}}
\newcommand{\rhat}{\hat{r}}
\newcommand{\hdagger}{(h^\dagger)}
\newcommand{\E}{\mathbb{E}}
\newcommand{\KL}{D_{\mathrm{KL}}}
\newcommand{\DF}{D_f}
\newcommand{\DFbeta}{D_{\widetilde{f}_\beta}}
\newcommand{\Fbeta}{\widetilde{f}_\beta}
\DeclareMathOperator*{\argmax}{arg\,max}
\DeclareMathOperator*{\argmin}{arg\,min}
\newcommand{\methodshort}{RelaxDICE}
\newcommand{\methodshortdrc}{RelaxDICE-DRC}
\title{Offline Imitation Learning with Suboptimal Demonstrations via \\ Relaxed Distribution Matching}
\author{
    Lantao Yu\equalcontrib \textsuperscript{\rm 1},
    Tianhe Yu\equalcontrib \textsuperscript{\rm 1},
    Jiaming Song \textsuperscript{\rm 2},
    Willie Neiswanger \textsuperscript{\rm 1},
    Stefano Ermon \textsuperscript{\rm 1}
}
\title{My Publication Title --- Single Author}
\author {
    Author Name
}
\title{My Publication Title --- Multiple Authors}
\author {
    %
    First Author Name,\textsuperscript{\rm 1}
    Second Author Name, \textsuperscript{\rm 2}
    Third Author Name \textsuperscript{\rm 1}
}
\begin{document}

\maketitle

\begin{abstract}
Offline imitation learning (IL) promises the ability to learn performant policies from pre-collected demonstrations without interactions with the environment.
 However, imitating behaviors fully offline typically requires numerous expert data. To tackle this issue, we study the setting where we have limited expert data and supplementary suboptimal data. In this case, a well-known issue is the distribution shift between the learned policy and the behavior policy that collects the offline data.
 Prior works mitigate this issue by regularizing the KL divergence between the stationary state-action distributions of the learned policy and the behavior policy. We argue that such constraints based on exact distribution matching can be overly conservative 
  and hamper policy learning, especially when the imperfect offline data is highly suboptimal. To resolve this issue, we present RelaxDICE, which employs an asymmetrically-relaxed $f$-divergence for explicit support regularization. 
  Specifically, instead of driving the learned policy to exactly match the behavior policy, we impose little penalty whenever the density ratio between their stationary state-action distributions is upper bounded by a constant.
  Note that such formulation leads to a nested min-max optimization problem, which causes instability in practice. RelaxDICE addresses this challenge by supporting a closed-form solution for the inner maximization problem.
  Extensive empirical study shows that our method significantly outperforms the best prior offline IL method in six standard continuous control environments with over 30\% performance gain on average, across 22 settings where the imperfect dataset is highly suboptimal.
\end{abstract}

\section{Introduction}
Imitation learning (IL) \citep{pomerleau1988alvinn,GAIL2016Ho,Dagger2011Ross} studies the problem of programming agents directly with expert demonstrations. However, successful IL usually demands a large amount of optimal trajectories, and many adversarial IL methods~\citep{GAIL2016Ho,AIRLFu2018,ke2020imitation,kostrikov2018discriminator} require online interactions with the environment to get samples from intermediate policies for policy improvement. Considering these limitations,
we focus on the setting of offline imitation learning with supplementary imperfect demonstrations~\citep{kim2021demodice}, which holds the promise of addressing these challenges (i.e. no large collection of expert data and no online interactions with the environment during training). Specifically, we aim to learn a policy using a small amount of expert demonstrations and a large collection of trajectories with unknown level of optimality 
that are typically cheaper to obtain.

As in prior offline reinforcement learning (RL) and offline policy evaluation works, offline IL~\citep{kim2021demodice} also has the distribution shift problem~\citep{levine2020offline,kumar2019stabilizing,fujimoto2018off}: the agent performs poorly during evaluation because the learned policy deviates from the behavior policy used for collecting the offline data.
To mitigate this problem, prior works based on \emph{distribution correction estimation} (the ``DICE'' family) \citep{nachum2019dualdice,nachum2019algaedice,lee2021optidice,kim2021demodice,ValueDICE2019Kostrikov,zhang2020gendice,zhang2020gradientdice,yang2020off} collectively use a distribution divergence measure (e.g. $f$-divergence) to regularize the learned policy to be similar to the behavior policy. 
However, such regularization schemes based on exact distribution matching can be overly conservative. For example, in settings where the offline data is highly suboptimal, such an approach will require careful tuning of the regularization strength (denoted as $\alpha$) in order to find the delicate balance between policy optimization on limited expert data and policy regularization to the behavior policy. Otherwise, the resulting policy will either suffer from large distribution shift because of small $\alpha$ or behave too similarly to the suboptimal behavior policy due to large $\alpha$.
We argue that a more appropriate regularization for offline imitation learning with limited expert data and diverse supplementary data is indispensable, which is the goal of this work.

\begin{figure*}[t]
\centering
\includegraphics[width=.96\textwidth]{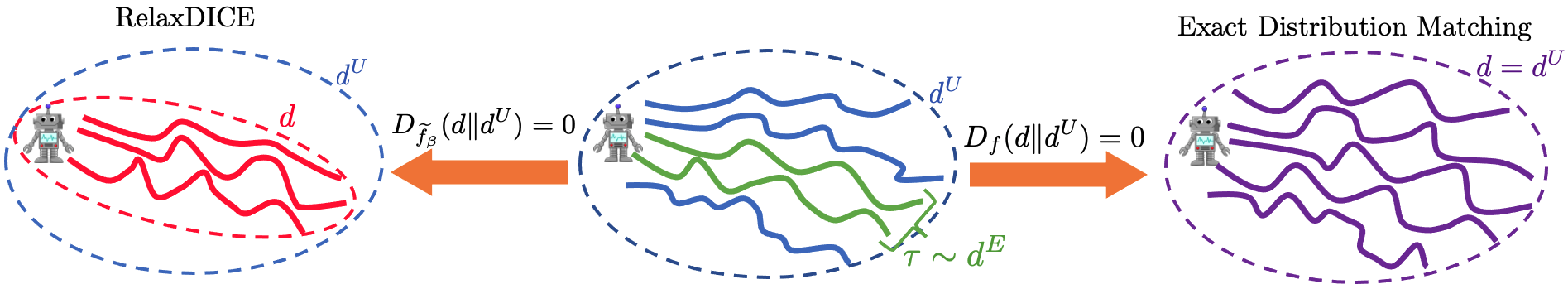}
\caption{\footnotesize Illustration of regularizations based on relaxed distribution alignment (left) and exact distribution matching (right). The curves represent trajectories sampled from the expert policy (green), the behavior policy that collects the suboptimal data (blue), and the learned policy (red and purple) under different kinds of regularization. Dashed lines represent the support of these distributions.}
\label{fig:illustration}
\end{figure*}

Towards this end, we draw inspiration from domain adaptation theory \citep{wu2019domain} and present \methodshort{}, which employs an asymmetrically-relaxed $f$-divergence for explicit support regularization instead of exact distribution matching between the learned policy and the suboptimal behavior policy.
On one hand, we still encourage the learned policy to stay within the support of the pre-collected dataset such that policy evaluation/improvement is stable and reliable. On the other hand, we will not drive the learned policy to exactly match the behavior policy since the offline demonstrations have unknown level of optimality (see Figure~\ref{fig:illustration} for illustration). 
Different from \citep{wu2019behavior,levine2020offline} which tried to directly regularize the policies and observed little benefits in the context of offline RL, we enforce such a regularization over stationary state-action distributions to effectively reflect the diversity in both states and actions (rather than enforce constraints only on policies/action distributions). However, this leads to a nested min-max optimization problem that causes instability during training.
We surprisingly found that our new formulation enjoys a closed-form solution for the inner maximization problem, thus preserving the key advantage of previous state-of-the-art DICE methods~\citep{lee2021optidice,kim2021demodice}.
Furthermore, the stationary state-action distribution of the suboptimal behavior policy can be potentially modified to be closer to that of the expert policy, by leveraging an approximate density ratio obtained from expert and suboptimal data. Thus we further propose \methodshortdrc{}, an extension of \methodshort{} by penalizing the relaxed $f$-divergence between the stationary state-action distributions of the learned policy and the density-ratio-corrected behavior policy.
This method also enjoys a desirable closed-form solution for the inner maximization and a potential for better policy improvement.

We empirically evaluate our method on a variety of continuous control tasks using environments and datasets from the offline RL benchmark D4RL~\citep{fu2020d4rl}. We construct datasets where there are a small amount of expert demonstrations and a large collection of imperfect demonstrations with different levels of optimality following the design choice in \cite{kim2021demodice}. More importantly, for each environment, we design up to four different settings that are much more challenging than the ones in \citep{kim2021demodice}, in the sense that the supplementary imperfect data are highly suboptimal. Extensive experimental results show that our method outperforms the most competitive prior offline IL method across all 22 tasks by an average margin over 30\%.
Furthermore, \methodshort{} is much more performant and robust with respect to hyperparameter changes than prior works~\citep{kim2021demodice} in our challenging settings,
demonstrating the superiority of our relaxed distribution matching scheme for offline imitation learning. 
\section{Background}\label{sec:background}
\paragraph{Markov Decision Process.} A Markov decision process (MDP) is defined by $\gM = \langle \gS, \gA, T, r, p_0, \gamma \rangle$, where $\gS$ is a set of states; $\gA$ is a set of actions; $T: \gS \times \gA \to \Delta(\gS)$ is the transition distribution and $T(s_{t+1}|s_t, a_t)$ specifies the probability of transitioning from state $s_t$ to state $s_{t+1}$ by executing action $a_t$; $p_0 \in \Delta(\gS)$ is the initial state distribution; $R: \gS \times \gA \to \sR$ is the reward function; and $\gamma \in [0,1]$ is the discount factor. A policy $\pi: \gS \to \Delta(\gA)$ maps from states to distributions over actions, which together with the MDP $\gM$, induces a stationary state-action distribution $d^\pi(s,a)$  (also called occupancy measure):
{\small
\begin{align*}
    d^\pi(s,a) = (1 - \gamma) \sum_{t=0}^\infty \gamma^t \mathrm{Pr}(& s_t = s, a_t=a|s_0 \sim p_0, \\
     & a_t \sim \pi(\cdot | s_t), s_{t+1} \sim T(\cdot | s_t, a_t)).
\end{align*}}
Here $1 - \gamma$ is a normalization factor such that the occupancy measure is a normalized distribution over $\gS \times \gA$. Because of the one-to-one correspondence described in the following theorem,
a policy optimization problem can be equivalently formulated as an occupancy measure optimization problem.

\begin{theorem}[\citep{feinberg2012handbook,syed2008apprenticeship}]\label{the:occupancy-and-policy} Suppose $d$ satisfies the following Bellman flow constraints:
{\small
\begin{align}
    & \sum_a d(s,a) = (1 - \gamma) p_0(s) + \gamma \sum_{s', a'} T(s|s',a') d(s', a'), \forall s. \nonumber \\
    & \text{and~~} d(s, a) \geq 0, \forall s, a.
    \label{eq:bellman-flow-constrains}
\end{align}
}
Define $\pi_d(a|s) \defeq \frac{d(s,a)}{\sum_{a'} d(s, a')}$. Then $d$ is the occupancy measure for $\pi_d$. Conversely if $\pi$ is a policy such that $d$ is its occupancy measure, then $\pi = \pi_d$ and $d$ satisfies \eqref{eq:bellman-flow-constrains}.
\end{theorem}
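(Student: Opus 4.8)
The plan is to prove the two implications separately. The ``conversely'' direction is a direct unrolling of the definition of the occupancy measure, while the first direction rests on a uniqueness argument for a linear fixed-point equation, which is where the discount factor $\gamma<1$ does the real work.

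First I would do the converse direction. Take a policy $\pi$ and let $d \defeq d^\pi$ be its occupancy measure. Nonnegativity is immediate. For the flow equation, write $\mathrm{Pr}(s_t=s)$ for the step-$t$ state marginal of the chain controlled by $\pi$ from $p_0$, and note the one-step recursion $\mathrm{Pr}(s_{t+1}=s)=\sum_{s',a'}T(s|s',a')\,\mathrm{Pr}(s_t=s')\,\pi(a'|s')$. Multiplying by $\gamma^{t+1}$, summing over $t\geq 0$, adding the $t=0$ term $(1-\gamma)p_0(s)$, and recognizing the resulting geometric sums as $\sum_a d(s,a)$ on the left and $\sum_{s',a'}T(s|s',a')d(s',a')$ on the right gives exactly \eqref{eq:bellman-flow-constrains}. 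For the identity $\pi=\pi_d$: since $\mathrm{Pr}(s_t=s,a_t=a)=\mathrm{Pr}(s_t=s)\pi(a|s)$ termwise, $d$ factorizes as $d(s,a)=\pi(a|s)\sum_{a'}d(s,a')$, so $\pi(a|s)=\pi_d(a|s)$ at every state $s$ with $\sum_{a'}d(s,a')>0$ (and the value of $\pi_d$ at unvisited states is irrelevant).

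Next, the forward direction. Assume $d\geq 0$ satisfies \eqref{eq:bellman-flow-constrains}, set $\mu(s)\defeq\sum_a d(s,a)$ and $\pi_d(a|s)\defeq d(s,a)/\mu(s)$ (defined arbitrarily where $\mu(s)=0$). Nonnegativity forces $d(s,a)=0$ whenever $\mu(s)=0$, so $d(s,a)=\mu(s)\pi_d(a|s)$ holds for all $(s,a)$. Substituting this into \eqref{eq:bellman-flow-constrains} and collapsing the sum over $a'$ turns the flow constraint into $\mu(s)=(1-\gamma)p_0(s)+\gamma\sum_{s'}P_{\pi_d}(s|s')\mu(s')$, where $P_{\pi_d}(s|s')\defeq\sum_{a'}T(s|s',a')\pi_d(a'|s')$ is the state transition kernel induced by $\pi_d$; in operator form, $(\mathrm{Id}-\gamma P_{\pi_d}^\top)\mu=(1-\gamma)p_0$. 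Since $P_{\pi_d}^\top$ is a Markov operator (norm $1$ on $L^1(\gS)$) and $\gamma<1$, the Neumann series $\sum_{t\geq0}\gamma^t(P_{\pi_d}^\top)^t$ converges and inverts $\mathrm{Id}-\gamma P_{\pi_d}^\top$, so $\mu$ is uniquely pinned down and equals $(1-\gamma)\sum_{t\geq0}\gamma^t(P_{\pi_d}^\top)^t p_0$. Finally I would recognize this as the state marginal of $d^{\pi_d}$: the step-$t$ state law of the $\pi_d$-controlled chain from $p_0$ is precisely $(P_{\pi_d}^\top)^t p_0$, so summing out the action in the definition of $d^{\pi_d}$ gives $\sum_a d^{\pi_d}(s,a)=\mu(s)$; and since $d^{\pi_d}(s,a)=\pi_d(a|s)\sum_{a'}d^{\pi_d}(s,a')$ by the same termwise factorization as above, we get $d^{\pi_d}(s,a)=\pi_d(a|s)\mu(s)=d(s,a)$, i.e. $d$ is the occupancy measure of $\pi_d$.

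The main obstacle I expect is the uniqueness step: once the policy has been fixed via the factorization $d(s,a)=\mu(s)\pi_d(a|s)$, one must argue that the Bellman flow equations determine the state marginal $\mu$ uniquely, which hinges on the invertibility of $\mathrm{Id}-\gamma P_{\pi_d}^\top$ — only true because $\gamma<1$ — and, in a non-finite state space, on phrasing the operator and geometric-series manipulations in $L^1(\gS)$ so they are justified. The bookkeeping for states with $\mu(s)=0$ is the only other point needing a (minor) remark.
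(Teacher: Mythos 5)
Your proof is correct. The paper itself does not prove this theorem --- it is imported as a known result from \citet{feinberg2012handbook} and \citet{syed2008apprenticeship} --- and your argument is essentially the standard one found there: unroll the definition of $d^\pi$ for the converse direction, and for the forward direction factor $d(s,a)=\mu(s)\pi_d(a|s)$, reduce the flow constraint to $(\mathrm{Id}-\gamma P_{\pi_d}^\top)\mu=(1-\gamma)p_0$, and invoke invertibility via the Neumann series (valid because $\gamma<1$, which the theorem implicitly assumes even though the paper allows $\gamma\in[0,1]$) to identify $\mu$ with the discounted state marginal of $\pi_d$. Your handling of the two minor caveats --- states with $\sum_{a'}d(s,a')=0$, and the $L^1$ phrasing for non-finite $\gS$ --- is appropriate and matches the usual treatment.
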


The Bellman flow constraints in \eqref{eq:bellman-flow-constrains} essentially characterize all possible occupancy measures consistent with the MDP, such that they can be induced by some policies. Therefore it is necessary to enforce these constraints when we design optimization problems over occupancy measures.

\paragraph{IL with Expert Data.} We can learn performant policies via imitation learning when a set of expert demonstrations $\gD^E$ is provided. The expert dataset $\gD^E = \{(s,a,s')\}$ is generated according to $(s,a) \sim d^E,~s' \sim T(\cdot|s,a)$, where $d^E$ is the occupancy measure of the expert policy. A classical IL approach is behavior cloning (BC), which optimizes a policy $\pi$ by minimizing the expected KL between $\pi^E(\cdot|s)$ and $\pi(\cdot|s)$ for $s \sim d^E(s)$ (the state marginal of expert occupancy measure):
{\small
\begin{align*}
    \argmin_{\pi} ~\E_{d^E(s)}\left[\KL\left(\pi^E(\cdot|s) \| \pi(\cdot|s)\right)\right]
    = &- \E_{d^E(s,a)} [\log \pi(a|s)].
\end{align*}
}
Alternatively, IL can be formulated as minimizing the $f$-divergence between occupancy measures: $\min_d \DF(d \| d^E)$ \citep{ho2016generative,ValueDICE2019Kostrikov,ke2020imitation,ghasemipour2020divergence}. However, since estimating and minimizing $f$-divergence requires the unknown density ratio $d/d^E$, which can be obtained only through variational estimation using samples from $d^E$ and $d$ (all intermediate policies), these IL methods are not offline and have to use adversarial training.

\paragraph{Offline IL with Expert and Non-Expert Data.} The standard IL setting above typically requires a large amount of optimal demonstrations from experts, and sometimes require online interactions with the MDP.
To address these limitations, researchers proposed to study offline IL with limited expert data and supplementary imperfect data \citep{kim2021demodice}, a meaningful yet challenging setting where no interaction with the environment is allowed, and we only have a small amount of expert demonstrations $\gD^E$ and an additional collection of suboptimal demonstrations $\gD^U$ with unknown level of optimality. The pre-collected dataset $\gD^U = \{(s,a,s')\}$ is generated according to $(s,a) \sim d^U,~s' \sim T(\cdot|s,a)$ with $d^U$ being the occupancy measure of some unknown behavior policy. In this setting, the key is to study how to leverage the additional imperfect dataset $\gD^U$ to provide proper regularization to help the policy/occupancy measure optimization on $\gD^E$. Towards this end, DemoDICE \citep{kim2021demodice} extends the offline RL method OptiDICE \citep{lee2021optidice} and uses $\KL(d \| d^U)$ to realize the regularization. Moreover, we note that a key to their success is both OptiDICE and DemoDICE avoid the nested min-max optimization \citep{nachum2019algaedice} by supporting a closed-form solution for their inner maximization problem.

\paragraph{Density Ratio Estimation via Classification.} Thanks to the connection between density ratio estimation and classification \citep{menon2016linking,yu2021unified}, given samples from two distributions $p$ and $q$, we can use any strictly proper scoring rule and a link function $\psi_\mathrm{dr}$ to recover the density ratio $p/q$. For example, we can use logistic regression to approximately recover $d^E/d^U$:
\begin{align}
    c^* = \argmax_{c: \gS \times \gA \to (0,1)} & \E_{d^E(s,a)}[\log c(s,a)] + \nonumber\\
    & \E_{d^U(s,a)} [\log (1 - c(s,a))]
    \label{eq:binary-classification}
\end{align}
Since $c^*(s,a) = \frac{d^E(s,a)}{d^E(s,a) + d^U(s,a)}$, the optimal density ratio can be recovered as:
\begin{align}
    r^*(s,a) = \psi_\mathrm{dr}(c^*(s,a)) = \frac{c^*(s,a)}{1 - c^*(s,a)} = \frac{d^E(s,a)}{d^U(s,a)}
    \label{eq:link-function}
\end{align}

\section{Offline IL with Suboptimal
Demonstrations via \methodshort{}}\label{sec:method}
In this section, we present \methodshort{}, a novel method for offline imitation learning with expert and supplementary non-expert demonstrations. A key question to study in this meaningful yet challenging setting is how to derive offline algorithms with appropriate regularization $\Omega(d, d^U)$ to effectively leverage the additional imperfect dataset $\gD^U$. Formally, we begin with the following constrained optimization problem over the occupancy measure:
\begin{align}
    & \max_{d \geq 0}~ - \KL(d \| d^E) - \alpha \Omega(d, d^U) \label{eq:objective-constrained-optimization}\\
    & \mathrm{s.t.}~~ \sum_a d(s,a) = (1 - \gamma) p_0(s) + \nonumber\\
    &~~~~~~~~~~~~~~~~~~~~~~~~~~~~~~~~
    \gamma \sum_{s', a'} T(s|s',a') d(s', a'), \forall s \in \gS. \label{eq:bellman-flow-constrained-optimization}
\end{align}
where \eqref{eq:bellman-flow-constrained-optimization} is the Bellman flow constraints introduced in Theorem~\ref{the:occupancy-and-policy} that any valid occupancy measure must satisfy, and $\alpha > 0$ is a weight factor balancing between minimizing KL divergence with $d^E$ (estimated with the limited expert data) and preventing deviation from $d^U$. For example, a popular regularization choice in prior offline IL and offline RL works is the $f$-divergence $\DF(d \| d^E)$, which was originally designed for exact distribution matching between a model distribution and a target distribution \citep{nowozin2016f}. Although this choice can indeed enforce $d$ to be close to $d^U$, we think that divergences or distances for exact distribution matching can be overly conservative and may lead to undesired effects when $d^U$ is highly suboptimal. In this case, even the true optimal occupancy measure (corresponding to the true optimal policy) will incur a high penalty from $\Omega(d, d^U)$.
Although we can reduce $\alpha$ to mitigate the negative effect, we cannot remove the bias unless $\alpha$ approaches zero, which will then leave us at risk of exploring out-of-support state-actions because of a too small regularization strength. Moreover, prior theoretical work on offline RL \citep{zhan2022offline} also suggests that a smaller $\alpha$ will lead to a worse sample complexity and a higher error floor. 
Proofs for this section can be found in the appendix.

\subsection{An Optimistic Fix to the Pessimistic Regularization}
To ensure the suboptimal dataset contains useful information about the optimal policy $\pi^*$, theoretical studies typically make some assumptions about $\gD^U$. As a motivating example, a minimal assumption adopted in \citep{zhan2022offline} is the following $\pi^*$-concentrability\footnote{This assumption is much weaker than the all-policy concentrability in prior theoretical works \citep{munos2008finite,farahmand2010error,chen2019information}} (where $d^*$ is the occupancy measure of $\pi^*$):

\begin{assumption}\label{assumption:concentrability}
$d^U(s,a) > 0$ and there exists a constant $B$ such that
$d^{*}(s,a)/d^U(s,a) \leq B, \forall s, a.$
\end{assumption}

Under this assumption, we argue that an ideal regularization $\Omega(d, d^U)$ would aim to bound the density ratio $d / d^U$ by a constant, instead of driving towards $d \equiv d^U$.  In other words, we still want to regularize $d$ to stay in the support of $d^U$ so that policy evaluation/improvement is stable and reliable under a small distribution shift, but different from a divergence like $\DF(d \| d^U)$, we will impose little penalty on $d$ if $d / d^U \leq B$,
so that we will not  enforce $d$ to exactly match $d^U$ and the optimal policy can be preserved under the regularization (i.e., $d^* \in \argmin_d \Omega(d, d^U)$).

Towards this end, we draw inspiration from domain adaptation theory \citep{wu2019domain} and propose to use the following relaxed $f$-divergence to realize $\Omega(d, d^U)$:
\begin{definition}[Asymmetrically-relaxed $f$-divergence]\label{def:relaxed-f}
Given a constant $\beta > 1$ and a strictly convex and continuous function $f: \sR_+ \to \sR$ satisfying $f(1) = 0$, the asymmetrically-relaxed $f$-divergence between two distributions $p$ and $q$ (defined over domain $\gX$) is defined as:
\begin{align}
    D_{\Fbeta} (p \| q) = \int_\gX q(x) \Fbeta \left(\frac{p(x)}{q(x)}\right) \mathrm{d}x,
\end{align}
where $\widetilde{f}_\beta$ is a partially linearized function of $f$ defined as:
\begin{align}
    \Fbeta(u) = \begin{dcases}
        f(u) + C_{f, \beta} & \text{if}~~ u \geq \beta \\
        f'(\beta) u - f'(\beta) & \text{if}~~ u < \beta
    \end{dcases}
\end{align}
where the constant $C_{f, \beta} \defeq -f(\beta) + f'(\beta) (\beta - 1)$.
\end{definition}
It is worth noting that $\Fbeta$ is also continuous, convex (but not strictly convex) and satisfies $\Fbeta(1) = 0$. More importantly, $\DFbeta(p \| q) = 0$ if and only if $p(x)/q(x) \leq \beta, \forall x \in \gX$ (proof can be found in the appendix). This property is valuable for IL with suboptimal demonstrations:

\begin{proposition}\label{prop:preserve-optimal-policy}
Under Assumption~\ref{assumption:concentrability}, for any strictly convex function $f$, let $\Omega_1(d, d^U) = \DF(d\|d^U)$ and $\Omega_2(d, d^U) = \DFbeta(d\|d^U)$ with $\beta = B$. When the behavior policy is not optimal ($d^U \neq d^*$), then $\Omega_1$ is biased while $\Omega_2$ preserves the optimal policy (i.e. $d^* \notin \argmin_d \Omega_1(d, d^U)$ and $d^* \in \argmin_d \Omega_2(d, d^U)$)
\end{proposition}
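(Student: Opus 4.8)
The plan is to prove the two assertions separately, since they hinge on opposite features of the two regularizers, and the core of the second assertion has already been isolated as the zero-set characterization stated right after Definition~\ref{def:relaxed-f}.

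\emph{$\Omega_1$ is biased.} Since $f$ is strictly convex with $f(1)=0$, Jensen's inequality gives, for every normalized $d$, $\DF(d\|d^U)=\E_{d^U}\!\left[f\!\left(\tfrac{d(s,a)}{d^U(s,a)}\right)\right]\ge f(1)=0$, with equality exactly when $d(s,a)/d^U(s,a)$ is $d^U$-almost-everywhere constant, i.e.\ $d=d^U$. The behavior-policy occupancy measure $d^U$ satisfies the Bellman flow constraints of Theorem~\ref{the:occupancy-and-policy}, so it belongs to the feasible set over which the $\argmin$ is taken and $\Omega_1(d^U,d^U)=0$. Under Assumption~\ref{assumption:concentrability} the ratio $d^*/d^U$ is bounded, so $\DF(d^*\|d^U)$ is finite; it is strictly positive because $d^*\neq d^U$, which by the one-to-one correspondence of Theorem~\ref{the:occupancy-and-policy} is exactly the hypothesis that the behavior policy is non-optimal. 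Hence $\Omega_1(d^*,d^U)>0=\min_d\Omega_1(d,d^U)$, so $d^*\notin\argmin_d\Omega_1(d,d^U)$.

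\emph{$\Omega_2$ preserves the optimal policy.} First, $\Fbeta$ is convex with $\Fbeta(1)=0$ (both recorded in the text), so the same Jensen argument yields $\DFbeta(d\|d^U)\ge0$ for every normalized $d$; thus the global minimum of $\Omega_2$ is at least $0$, and it suffices to exhibit a feasible minimizer attaining $0$. Take $\beta=B$. Assumption~\ref{assumption:concentrability} gives $d^U>0$ and $d^*(s,a)/d^U(s,a)\le B=\beta$ for all $(s,a)$, so the zero-set characterization $\DFbeta(p\|q)=0$ iff $p/q\le\beta$ everywhere (proved in the appendix) gives $\Omega_2(d^*,d^U)=\DFbeta(d^*\|d^U)=0$. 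Since $d^*$ is the occupancy measure of $\pi^*$ it satisfies the Bellman flow constraints and is therefore feasible; being a global minimizer, $d^*\in\argmin_d\Omega_2(d,d^U)$.

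The only substantive ingredient is the zero-set characterization, which I would prove by writing $\DFbeta(p\|q)=\int q(x)\big[\Fbeta\!\big(\tfrac{p(x)}{q(x)}\big)-f'(\beta)\big(\tfrac{p(x)}{q(x)}-1\big)\big]\diff x+f'(\beta)\big(\E_q[p/q]-1\big)$, noting that the last term vanishes by normalization and that the bracket is $0$ on $\{p/q\le\beta\}$ while being strictly positive on $\{p/q>\beta\}$ by strict convexity of $f$ — the linear piece of $\Fbeta$ is precisely the tangent to $u\mapsto f(u)+C_{f,\beta}$ at $u=\beta$ — so the integral vanishes iff $\{p/q>\beta\}$ is null. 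The mild subtlety here is that the linear term cancels only globally (via $\int p=\int q=1$), not pointwise; once that bookkeeping is handled the lemma, and hence the proposition, follow immediately, and I do not anticipate any further obstacle.
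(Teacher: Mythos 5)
Your proof is correct and follows essentially the route the paper intends: Jensen's inequality with strict convexity shows $\DF(\cdot\|d^U)$ has its unique minimizer at the feasible point $d^U\neq d^*$, while the zero-set property of $\DFbeta$ (the paper's appendix lemma) together with Assumption~\ref{assumption:concentrability} and $\beta=B$ shows $d^*$ attains the global minimum $0$ of $\Omega_2$. If anything, your tangent-line decomposition is slightly more careful than the paper's own lemma, which only treats the strict case $p/q<\beta$ even though Assumption~\ref{assumption:concentrability} permits $d^*/d^U=B$; your observation that the linear piece of $\Fbeta$ is the tangent of $f+C_{f,\beta}$ at $\beta$ (so the formula extends to $u=\beta$ by continuity) closes that boundary case and also supplies the ``only if'' direction, which the proposition itself does not need.
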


Thus we propose to use the relaxed $f$-divergence to realize the regularization. Let $\Omega(d, d^U) = \DFbeta(d \| d^U)$ and we aim to solve the constrained optimization problem in Eq.~(\ref{eq:objective-constrained-optimization})-(\ref{eq:bellman-flow-constrained-optimization}) in an offline fashion. Apply a change of variable $\omega(s,a) = \frac{d(s,a)}{d^U(s,a)}$ to the Lagrangian of above constrained optimization, we can get the following optimization problem over $\omega$ and $v$ (with $v(s)$ being the Lagrange multipliers) (derivations can be found in the appendix):
{\small
\begin{align}
    & \max_{\omega \geq 0} \min_{v}~  L_{\alpha,\beta}(\omega, v) \defeq~ 
    (1 - \gamma) \E_{p_0(s)} [v(s)] + \label{eq:L-omega-v-minimax}\\
    & \E_{d^U(s,a)} \left[\omega(s,a) e_v(s,a) - \omega(s,a) \log(\omega(s,a)) - \alpha \Fbeta(\omega(s,a))\right] \nonumber
\end{align}}
Here, $e_v(s,a) \defeq \log \frac{d^E(s,a)}{d^U(s,a)} + \gamma (\gT v)(s,a) - v(s)$, where the density ratio $d^E / d^U$ can be estimated via Eq.~(\ref{eq:binary-classification})-(\ref{eq:link-function}) and $(\gT v) (s,a) \defeq \sum_{s'} T(s'|s,a) v(s')$. Note that \eqref{eq:L-omega-v-minimax} can be estimated only using offline datasets $\gD^E$ and $\gD^U$ (assuming $\gD^U$ contains a set of initial states sampled from $p_0$).

However, the nested min-max optimization in \eqref{eq:L-omega-v-minimax} usually results in unstable training in practice. To avoid this issue, we follow \citep{lee2021optidice,kim2021demodice} to assume that every state $s \in \gS$ is reachable for the given MDP $\gM$ and thus there exists a strictly feasible $\omega$ such that $\omega(s,a) = d(s,a)/d^U(s,a) > 0,~\forall s, a$. Since \eqref{eq:L-omega-v-minimax} is a convex optimization problem with strict feasibility, due to strong duality via Slater's condition \citep{boyd2004convex}, we know that:
\begin{align}
    \max_{\omega \geq 0} \min_{v}~ L_{\alpha,\beta}(\omega, v) = \min_{v} \max_{\omega \geq 0}~ L_{\alpha,\beta}(\omega, v)
\end{align}
By changing the max-min problem to min-max problem and using a particular convex function to instantiate the relaxed $f$-divergence, we can obtain the following closed-form solution for the inner maximization problem:

\begin{restatable}{theorem}{closedforminner}
\label{the:closed-form-1}
Let $\DFbeta$ be the relaxed $f$-divergence in Definition~\ref{def:relaxed-f}, with the associated convex function defined as $f(u) = u \log u$. Then the closed-form solution $\omega^*_{v}(s,a) = \argmax_{\omega \geq 0} L_{\alpha, \beta}(\omega, v)$ is:
\begin{align}
    & \omega^*_{v}(s,a) 
    = \label{eq:optimal-omega} \\
    & \begin{dcases}
        \exp\left(\frac{e_v(s,a)}{1 + \alpha} - 1\right) & \text{if}~~ \mathfrak{A}(s,a) \\
        \exp\left(e_v(s,a) - 1 - \alpha(\log \beta + 1)\right) & \text{otherwise}
    \end{dcases}
    \nonumber
\end{align}
where $\mathfrak{A}(s,a)$ denotes the event: $\frac{e_v(s,a)}{\alpha + 1} > \log \beta + 1$.
Define $h(\omega(s,a)) \defeq \omega(s,a) e_v(s,a) - \omega(s,a) \log (\omega(s,a)) - \alpha \Fbeta(\omega(s,a))$
such that $L_{\alpha, \beta}(\omega, v) = \E_{d^U(s,a)}[h(\omega(s,a))] + (1 - \gamma) \E_{p_0(s)}[v(s)]$. 
Then we have:
\begin{align*}
    & h(\omega^*_v(s,a)) = \\
    & \begin{dcases}
        (1 + \alpha) \exp\left(\frac{e_v(s,a)}{1 + \alpha} - 1\right) + C_1 & \text{if}~~ \mathfrak{A}(s,a) \\
        \exp\left(e_v(s,a) - 1 - \alpha(\log \beta + 1)\right) + C_2 & \text{otherwise}
    \end{dcases}
\end{align*}
where $C_1 = - \alpha C_{f, \beta}$ and $C_2= \alpha (\log \beta + 1)$ are constants w.r.t. $\omega$ and $v$.
\end{restatable}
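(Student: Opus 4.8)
The plan is to exploit that $L_{\alpha,\beta}(\omega,v)=\E_{d^U(s,a)}[h(\omega(s,a))]+(1-\gamma)\E_{p_0(s)}[v(s)]$ couples the value of $\omega$ at distinct state--actions only through the integral, so for fixed $v$ the inner maximization $\max_{\omega\ge 0}L_{\alpha,\beta}(\omega,v)$ decouples into, for each $(s,a)$, the one-dimensional problem $\max_{\omega\ge 0}h(\omega)$ with $e\defeq e_v(s,a)$ treated as a constant. First I would instantiate the quantities for $f(u)=u\log u$: $f'(u)=\log u+1$, $f'(\beta)=\log\beta+1$, and $C_{f,\beta}=-\beta\log\beta+(\log\beta+1)(\beta-1)$, and write $\Fbeta$ explicitly as $u\log u+C_{f,\beta}$ on $\{u\ge\beta\}$ and $(\log\beta+1)(u-1)$ on $\{u<\beta\}$. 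I would also record that $h$ is concave on $[0,\infty)$ (the term $\omega e$ is linear, $-\omega\log\omega$ is concave, and $-\alpha\Fbeta$ is concave since $\Fbeta$ is convex and $\alpha>0$) and that $h'(\omega)\to+\infty$ as $\omega\to 0^+$ and $h'(\omega)\to-\infty$ as $\omega\to\infty$, so $h$ has a unique interior maximizer.

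Next I would treat the two branches of $\Fbeta$ separately. On $\{\omega\ge\beta\}$ one has $h(\omega)=\omega e-(1+\alpha)\omega\log\omega-\alpha C_{f,\beta}$; solving $h'(\omega)=e-(1+\alpha)(\log\omega+1)=0$ gives $\log\omega=\tfrac{e}{1+\alpha}-1$, and this critical point satisfies $\omega\ge\beta$ precisely when $\tfrac{e}{1+\alpha}\ge\log\beta+1$. On $\{\omega<\beta\}$ one has $h(\omega)=\omega e-\omega\log\omega-\alpha(\log\beta+1)(\omega-1)$; solving $h'(\omega)=e-(\log\omega+1)-\alpha(\log\beta+1)=0$ gives $\log\omega=e-1-\alpha(\log\beta+1)$, and this critical point satisfies $\omega<\beta$ precisely when $\tfrac{e}{1+\alpha}<\log\beta+1$. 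The two feasibility conditions are complementary with shared threshold $\tfrac{e}{1+\alpha}=\log\beta+1$, i.e. the first is the event $\mathfrak{A}(s,a)$ and the second its complement; by concavity, the feasible critical point of its branch is the global maximizer, which is exactly Eq.~(\ref{eq:optimal-omega}) (at the threshold both formulas return $\omega^*_v=\beta$, so the tie-break in $\mathfrak{A}$ is immaterial).

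Finally I would compute $h(\omega^*_v)$ by back-substitution, using $\log\omega^*_v$ to cancel the $-\omega\log\omega$ term. On $\mathfrak{A}$, $(1+\alpha)\log\omega^*_v=e-(1+\alpha)$, so $h(\omega^*_v)=\omega^*_v e-\omega^*_v\bigl(e-(1+\alpha)\bigr)-\alpha C_{f,\beta}=(1+\alpha)\omega^*_v-\alpha C_{f,\beta}$, giving $C_1=-\alpha C_{f,\beta}$. Off $\mathfrak{A}$, $\log\omega^*_v=e-1-\alpha(\log\beta+1)$, and the analogous cancellation gives $h(\omega^*_v)=\omega^*_v+\alpha(\log\beta+1)$, giving $C_2=\alpha(\log\beta+1)$. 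This is a routine computation; the only delicate point is the bookkeeping at the kink $\omega=\beta$ --- verifying that the two branch feasibility conditions partition the range of $\tfrac{e}{1+\alpha}$ and that concavity of $h$ lets us avoid separately checking the boundary point $\omega=\beta$ --- which I expect to be the main (and minor) obstacle.
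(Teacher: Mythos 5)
Your proposal is correct and follows essentially the same route as the paper's proof: pointwise maximization of the strictly concave piecewise function $h$, solving $h'(\omega)=0$ on each branch of $\Fbeta$, observing that the two feasibility conditions partition around the threshold $\frac{e_v}{1+\alpha}=\log\beta+1$ (which is exactly the event $\mathfrak{A}$), and back-substituting to obtain the constants $C_1=-\alpha C_{f,\beta}$ and $C_2=\alpha(\log\beta+1)$. The only cosmetic difference is that the paper compares the two local maximizers $\omega^*_{\leq\beta}$ and $\omega^*_{\geq\beta}$ via the sign of $h'(\beta)$ and strict monotonicity of $h'$, whereas you invoke global concavity together with continuity of $\Fbeta'$ at the kink, which is an equivalent justification.
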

Based on Theorem~\ref{the:closed-form-1}, \methodshort{} solves
$\hat{v}^* = \argmin_v L_{\alpha, \beta}(v) = L_{\alpha, \beta}(\omega^*_v, v)$, which provides us a tractable way to leverage a less conservative support regularization to effectively learn from potentially highly suboptimal offline data.

\subsection{\methodshort{} with Density Ratio Correction}\label{sec:relaxdice-drc}
As discussed before, given datasets $\gD^E$ and $\gD^U$, we can obtain an approximate density ratio $\hat{r}(s,a) \approx \frac{d^E(s,a)}{d^U(s,a)}$. Although we should not expect such an approximate density ratio to be accurate given limited samples, it is likely that the density-ratio-corrected occupancy measure $\hat{r} \cdot d^U$ is closer to 
the expert occupancy measure 
$d^E$ than $d^U$. Thus another rational choice for realizing the regularization $\Omega(d, d^U)$ in \eqref{eq:objective-constrained-optimization} is the relaxed $f$-divergence between $d$ and $\hat{r} \cdot d^U$.
With this goal, we derive an extension of our method, \methodshort{} with Density Ratio Correction (\methodshortdrc{}).

Let $\Omega(d, d^U) = \DFbeta(d \| \hat{r} \cdot d^U)$. Similar to the derivation of \methodshort{}, we apply a change of variable $\omega(s,a) = \frac{d(s,a)}{d^U(s,a)}$ to the Lagrangian of the constrained optimization problem in Eq.(\ref{eq:objective-constrained-optimization})-(\ref{eq:bellman-flow-constrained-optimization}) and with strong duality, we can obtain the following min-max optimization problem (derivations in the appendix):
{\small
\begin{align}
    & \min_{v} \max_{\omega \geq 0}~ L^\dagger_{\alpha,\beta}(\omega, v) \defeq~ (1 - \gamma) \E_{p_0(s)} [v(s)] + \label{eq:L-omega-v-minimax-drc} \\
    & \E_{d^U(s,a)}
    \left[\omega(s,a) \left( e_v(s,a) - \log(\omega(s,a)) \right) - \alpha \hat{r}(s,a) \Fbeta\left(\frac{\omega(s,a)}{\hat{r}(s,a)}\right)\right] \nonumber
\end{align}}
where $v(s)$ is the Lagrange multiplier and $e_v(s,a)$ is defined same as before.

Similar to \methodshort{}, we then introduce the following theorem to characterize the closed-form solution of the inner maximization problem in \eqref{eq:L-omega-v-minimax-drc} to avoid nested min-max optimization:
\begin{restatable}{theorem}{closedformdrc}
\label{the:closed-form-2}
Let $\DFbeta$ be the relaxed $f$-divergence in Definition~\ref{def:relaxed-f}, with $f(u) = u \log u$. Then the closed-form solution $\omega^*_{v}(s,a) = \argmax_{\omega \geq 0} L^\dagger_{\alpha, \beta}(\omega, v)$ is:

\begin{align}
    & \omega^*_{v}(s,a) 
    = \label{eq:optimal-omega-drc}\\
    & \begin{dcases}
        \exp\left(\frac{e_v(s,a) + \alpha \log \hat{r}(s,a)}{1 + \alpha} - 1\right)  & \text{if}~~ 
        \mathfrak{B}(s,a) \\
        \exp\left(e_v(s,a) - 1 - \alpha(\log \beta + 1)\right) & \text{otherwise}
    \end{dcases}
    \nonumber
\end{align}
where $\mathfrak{B}(s,a)$ denotes the event: $\frac{e_v(s,a) - \log \hat{r}(s,a)}{\alpha + 1} > \log \beta + 1$.
Define $h^\dagger(\omega(s,a)) \defeq \omega(s,a) e_v(s,a) - \omega(s,a) \log (\omega(s,a)) - \alpha \hat{r}(s,a) \Fbeta\left(\frac{\omega(s,a)}{\hat{r}(s,a)}\right)$ such that $L^\dagger_{\alpha, \beta}(\omega, v) = \E_{d^U(s,a)}[h^\dagger(\omega(s,a))] + (1 - \gamma) \E_{p_0(s)}[v(s)]$. Then we have:
{\small
\begin{align*}
    & h^\dagger(\omega^*_v(s,a)) = \\
    & \begin{dcases}
        (1 + \alpha) \exp\left(\frac{e_v(s,a) + \alpha \log \hat{r}(s,a)}{1 + \alpha} - 1\right) + C_3 
        & \text{if}~~
        \mathfrak{B}(s,a) \\
        \exp\left(e_v(s,a) - 1 - \alpha(\log \beta + 1)\right) + C_4 & \text{otherwise}
    \end{dcases}
\end{align*}
}
where $C_3 = - \alpha C_{f, \beta} \rhat(s,a)$ and $C_4= \alpha (\log \beta + 1) \rhat(s,a)$ are constants w.r.t. $\omega$ and $v$.
\end{restatable}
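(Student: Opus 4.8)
The plan is to reduce the inner maximization to a pointwise (per state-action) scalar optimization, exploit concavity, and then read off the maximizer from a two-branch case analysis dictated by the piecewise definition of $\Fbeta$. First I would note that $L^\dagger_{\alpha,\beta}(\omega,v) = \E_{d^U(s,a)}[h^\dagger(\omega(s,a))] + (1-\gamma)\E_{p_0(s)}[v(s)]$, where the second term does not involve $\omega$ and the expectation in the first term is over a fixed distribution $d^U$; hence $\max_{\omega\geq 0} L^\dagger_{\alpha,\beta}(\omega,v)$ decomposes into maximizing $h^\dagger(x)$ over $x\geq 0$ separately at each $(s,a)$. Writing $e\defeq e_v(s,a)$ and $r\defeq\rhat(s,a)>0$, and substituting $f(u)=u\log u$ (so that $f'(\beta)=\log\beta+1$), the definition of $\Fbeta$ gives $h^\dagger(x) = x(e+\alpha\log r) - (1+\alpha)x\log x - \alpha r\,C_{f,\beta}$ on the branch $x\geq r\beta$, and $h^\dagger(x) = x\bigl(e-\alpha(\log\beta+1)\bigr) - x\log x + \alpha r(\log\beta+1)$ on the branch $0<x<r\beta$; the two expressions agree at $x=r\beta$ since $\Fbeta$ is continuous. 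Because $x\mapsto xe-x\log x$ is (strictly) concave and $x\mapsto -\alpha r\,\Fbeta(x/r)$ is concave ($\Fbeta$ convex, composed with an affine map, scaled by $-\alpha r<0$), $h^\dagger$ is strictly concave on $[0,\infty)$ and so has a unique maximizer.

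Next I would locate the stationary point on each branch. Setting the derivative of the first-branch expression to zero gives $x_A=\exp\bigl(\tfrac{e+\alpha\log r}{1+\alpha}-1\bigr)$, and of the second $x_B=\exp\bigl(e-1-\alpha(\log\beta+1)\bigr)$. A short computation shows $x_A\geq r\beta \iff \tfrac{e-\log r}{1+\alpha}\geq \log\beta+1$ and $x_B\leq r\beta \iff \tfrac{e-\log r}{1+\alpha}\leq \log\beta+1$; that is, $x_A$ lies in the region of the first branch exactly when the event $\mathfrak{B}(s,a)$ holds, and $x_B$ lies in the region of the second branch exactly when it does not, with $x_A=x_B=r\beta$ on the common boundary $\tfrac{e-\log r}{1+\alpha}=\log\beta+1$. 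By strict concavity of $h^\dagger$, the unique global maximizer is therefore $x_A$ under $\mathfrak{B}(s,a)$ and $x_B$ otherwise, which is precisely \eqref{eq:optimal-omega-drc}.

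Finally, I would substitute $\omega^*_v(s,a)$ back into $h^\dagger$. On the $\mathfrak{B}$ branch, $\log x_A=\tfrac{e+\alpha\log r}{1+\alpha}-1$ makes the terms $x(e+\alpha\log r)-(1+\alpha)x\log x$ collapse to $(1+\alpha)x_A$, yielding $(1+\alpha)\exp\bigl(\tfrac{e_v(s,a)+\alpha\log\rhat(s,a)}{1+\alpha}-1\bigr)+C_3$ with $C_3=-\alpha C_{f,\beta}\,\rhat(s,a)$; on the complementary branch, $\log x_B=e-1-\alpha(\log\beta+1)$ collapses the linear and $-x\log x$ terms to $x_B$, yielding $\exp\bigl(e_v(s,a)-1-\alpha(\log\beta+1)\bigr)+C_4$ with $C_4=\alpha(\log\beta+1)\,\rhat(s,a)$. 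This matches the claimed formula for $h^\dagger(\omega^*_v(s,a))$.

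I expect the main obstacle to be the bookkeeping in the branch-selection step: correctly translating ``the stationary point of a branch lies in that branch's region'' into the clean inequality defining $\mathfrak{B}(s,a)$, and verifying that the two cases are exhaustive and consistent at the kink $x=r\beta$, so that no separate ``maximum at the kink'' case needs to be treated. The remaining work is routine differentiation and substitution, essentially mirroring the proof of Theorem~\ref{the:closed-form-1} with the extra $\rhat(s,a)$ factors carried through.
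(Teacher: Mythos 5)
Your proposal is correct and follows essentially the same route as the paper's proof: reduce to a pointwise strictly concave maximization of $h^\dagger$, find the stationary point on each branch of $\Fbeta$, select the branch via the inequality defining $\mathfrak{B}(s,a)$ (which is equivalent to the paper's sign condition on $(h^\dagger)'(\beta\rhat(s,a))$), and back-substitute to obtain the stated values with $C_3$ and $C_4$. The only cosmetic difference is that you phrase branch selection as "the stationary point lies in its own region" while the paper argues via the sign of the derivative at the kink and local maxima on the two intervals; these are the same argument.
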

Based on Theorem~\ref{the:closed-form-2}, \methodshortdrc{} solves 
$\hat{v}^* = \argmin_v L^\dagger_{\alpha, \beta}(v) = L^\dagger_{\alpha, \beta}(\omega^*_v, v)$, which has the potential for better policy learning because of a more well-behaved regularization.

\subsection{Policy Extraction}
Given $v^*$, the corresponding density ratio $\hat{\omega}^*$ can be recovered according to \eqref{eq:optimal-omega} (for \methodshort{}) and \eqref{eq:optimal-omega-drc} (for \methodshortdrc{}) respectively. 
We can then use the following weighted BC objective (importance sampling or self-normalized importance sampling) for policy extraction:
\begin{align}
    & \max_\pi~ \E_{d^U(s,a)}[\hat{\omega}^*(s,a) \log \pi(a|s)] ~~\text{or} \label{eq:policy-extraction}\\
    & \max_\pi~ \frac{\E_{d^U(s,a)}[\hat{\omega}^*(s,a) \log \pi(a|s)]}{\E_{d^U(s,a)} [\hat{\omega}^*(s,a)]}
    \nonumber
\end{align}
In practice, we use samples from $\gD^U$ to estimate the expectations and we find that the latter one (using the self-normalized weight) tends to perform better, which we employ in our experiments.

\subsection{Practical Considerations}
Since we only have samples $(s,a,s')$ from the dataset $\gD^U$, similar to \citep{ValueDICE2019Kostrikov,lee2021optidice,kim2021demodice}, we have to use a single-point estimation $\hat{e}_v(s,a,s') = \log \frac{d^E(s,a)}{d^U(s,a)} + \gamma v(s') - v(s)$ for $e_v(s,a)$. This estimation is generally biased (when the MDP is stochastic) due to the non-linear exponential function outside of $e_v$. However, similar to the observation in \citep{ValueDICE2019Kostrikov}, we found this simple approach was enough to achieve good empirical performance on the standard benchmark domains we considered, thus we do not further use the Fenchel conjugate to remove the bias \citep{nachum2019dualdice}.

We use multilayer perceptron (MLP) networks to parametrize the classifier $c_\theta$ in \eqref{eq:binary-classification}, the Lagrange multiplier $v_\phi$, and the policy $\pi_\psi$ in \eqref{eq:policy-extraction}. 
Since the objectives $L_{\alpha, \beta}(v)$ and $L^\dagger_{\alpha, \beta}(v)$ contain exponential terms, we use gradient penalty \citep{gulrajani2017improved} to enforce Lipschitz constraints on networks $c_\theta$ and $v_\phi$, which can effectively stabilize the training. The required density ratio $d^E/d^U$ in $\hat{e}_v$ will be estimated via \eqref{eq:link-function} as $\rhat_\theta = \frac{c_\theta}{1 - c_\theta}$.

Regarding the hyperparameter $\beta$, in \methodshort{}, since ideally $\beta$ should be around the upper bound of $d^E/d^U$, we can automatically set it using the approximate density ratio $\rhat_\theta$ (e.g., by setting $\beta$ to be the running average of the maximum estimated density ratio of each minibatch); while in \methodshortdrc{}, $\beta$ should characterize the upper bound of the density ratio $d^E/ (\rhat_\theta \cdot d^U)$, which we expect to be small (e.g.$1.5$ or $2$) as $\rhat_\theta \cdot d^U$ is a density-ratio-corrected occupancy measure. In summary, \methodshort{} does not introduce new hyperparameter that requires tuning by automatically setting $\beta$ according to the data, while \methodshortdrc{} has the potential for better policy learning with the requirement of manually specifying $\beta$. More details of the practical implementations can be found in the appendix.
\section{Related Work}
\label{sec:rw}

\paragraph{Learning from imperfect demonstrations.} Imitation learning~\citep{pomerleau1988alvinn,Dagger2011Ross,GAIL2016Ho,Feedback2021Spencer} typically requires many optimal demonstrations, which could be expensive and time-consuming to collect. To address this limitation, imitation learning from imperfect demonstrations~\citep{wu2019imitation,brown2019extrapolating,brown2020better,brantley2019disagreement,tangkaratt2020variational,Wang2018SupervisedRL} arises as a promising alternative. 

To do so, prior works assume that the imperfect demonstrations consist of a mixture of expert data and suboptimal data and have considered two-step importance weighted IL~\citep{wu2019imitation}, learning imperfect demonstrations with adversarial training~\citep{wu2019imitation,wang2021learning} and training an ensemble of policies with weighted BC objectives~\citep{sasaki2020behavioral}. Note that \cite{wu2019imitation} assumes the access to the optimality labels in the imperfect demonstrations whereas \cite{wang2021learning} and \cite{sasaki2020behavioral} remove this strong assumption, which is followed in our work. Moreover, \cite{wu2019imitation} and \cite{wang2021learning} require online data collection for policy improvement while our work focuses on learning from offline data. 

The closest work to our setting is DemoDICE~\citep{kim2021demodice}, which performs offline imitation learning with a KL constraint to regularize the learned policy to stay close to the behavior policy. Such constraint can mitigate the distribution shift issue when learning from offline data~\citep{levine2020offline,kumar2019stabilizing,fujimoto2018off}, but could be overly conservative due to the exact distribution matching regularization especially when the imperfect data is highly suboptimal (see Proposition~\ref{prop:preserve-optimal-policy}). Our method mitigates this issue by instead using a support regularization. Although \citep{wu2019behavior} discussed a brief empirical exploration that using support regularization over policies offers little benefits, we instead formulate it as a constrained optimization over occupancy measures to take into consideration the diversity in both states and actions and observed clear practical benefits. Moreover, we surprisingly found that the increased complexity in minimax optimization can be resolved by the convenient closed-form solutions of the inner maximization problems.

\paragraph{Offline learning with stationary distribution correction.} Prior works in offline RL / IL have used distribution correction to mitigate distribtuion shift. AlgaeDICE \citep{nachum2019algaedice} leverages a dual formulation of $f$-divergence \citep{nachum2019dualdice} to regularize the stationary distribution besides the policy improvement objective in offline RL. ValueDICE~\citep{ValueDICE2019Kostrikov} uses a similar formulation to AlgaeDICE for off-policy distribution matching with expert demonstrations. However, both ValueDICE and AlgaeDICE need to solve the nested min-max optimization problem, which is usually unstable in practice. OptiDICE~\citep{lee2021optidice} and DemoDICE~\citep{kim2021demodice} resolve this issue via deriving a closed-form solution of their inner optimization problem. Our method also enjoys the same desired property while using an asymmetrically-relaxed $f$-divergence~\citep{wu2019domain} as a more appropriate regularization in face of highly suboptimal offline data.

\section{Experiments}\label{sec:experiments}

In our experiments, we aim to answer the following three questions: (1) how do \methodshort{} and \methodshortdrc{} compare to prior works on standard continuous-control tasks using limited expert data and suboptimal offline data? (2) can \methodshort{} remain superior performance compared to prior methods as the quality of the suboptimal offline dataset decreases? (3) can \methodshort{} behave more robustly with respect to different hyperparameter choices compared to prior methods?

\begin{table*}[ht]
\begin{center}
\resizebox{\textwidth}{!}{\begin{tabular}{l|l|l|l|l|l|ll|lll}
\toprule
& & & & & & \multicolumn{2}{c}{\textbf{BC-DRC}} & \multicolumn{3}{c}{\textbf{BC}} \\
\textbf{Envs} & \textbf{Tasks} & \textbf{RelaxDICE} & \textbf{\methodshortdrc{}} & \textbf{DemoDICE} & \textbf{BCND} & $\eta = 0.0$ & $\eta = 0.5$ & $\eta = 0.0$ & $\eta = 0.5$ & $\eta = 1.0$\\ \midrule
& \texttt{L1} & \textbf{74.6}$\pm$9.1 & \textbf{73.6}$\pm$6.3 & 70.9$\pm$9.0 & 6.6$\pm$2.9 & 1.4$\pm$1.1 & 2.9$\pm$3.9 & 1.8$\pm$1.2 & 7.6$\pm$8.0 & 17.8$\pm$11.7\\
& \texttt{L2} & \textbf{64.2}$\pm$8.7 & \textbf{70.0}$\pm$13.7 & 54.4$\pm$6.4 & 4.8$\pm$4.2 & 2.4$\pm$1.2 & 4.9 $\pm$ 2.7 & 2.9$\pm$2.1 & 1.6$\pm$1.5 & 17.8$\pm$11.7\\
\texttt{hopper}& \texttt{L3} & \textbf{36.2}$\pm$5.6 & \textbf{41.5}$\pm$4.3 & 31.4$\pm$9.7 & 2.3$\pm$2.3 & 1.8$\pm$0.5 &1.5 $\pm$ 0.6 & 5.0$\pm$3.3 & 1.4$\pm$0.9 & 17.8$\pm$11.7\\
& \texttt{L4} & \textbf{38.7}$\pm$8.2 & \textbf{40.2}$\pm$6.9 & 34.9$\pm$5.6 & 0.9$\pm$0.3 & 0.7$\pm$0.2 & 1.5$\pm$0.7 & 0.8$\pm$0.4 & 0.8$\pm$0.3 & 17.8$\pm$11.7\\
\midrule
& \texttt{L1} & \textbf{59.1}$\pm$8.6 & \textbf{66.7}$\pm$5.1 & 58.6$\pm$8.0 & 2.5$\pm$0.1 & 2.6 $\pm$0.0 & 2.6$\pm$0.0 & 2.6$\pm$0.0 & 2.6$\pm$0.0 & 0.9 $\pm$ 1.1\\
& \texttt{L2} & \textbf{49.3}$\pm$4.7 & \textbf{52.1}$\pm$2.0 & 48.3$\pm$3.9 & 2.5$\pm$0.1 & 2.6 $\pm$0.0 & 2.6$\pm$0.0 & 2.6$\pm$0.0 & 2.6$\pm$0.0 & 0.9 $\pm$ 1.1\\
\texttt{halfcheetah}& \texttt{L3} & \textbf{35.0}$\pm$6.6 & \textbf{37.9}$\pm$4.0 & 32.9$\pm$2.1 & 2.5$\pm$0.0 & 2.6 $\pm$0.0 & 2.6$\pm$0.0 & 2.6$\pm$0.0 & 2.6$\pm$0.0 & 0.9 $\pm$ 1.1\\
& \texttt{L4} & \textbf{13.3}$\pm$2.1 & \textbf{16.1}$\pm$4.1 & 10.5$\pm$0.5 & 2.6$\pm$0.0 & 2.6 $\pm$0.0 & 2.6$\pm$0.0 & 2.6$\pm$0.0 & 2.6$\pm$0.0 & 0.9 $\pm$ 1.1\\
\midrule
& \texttt{L1} & 92.6$\pm$7.6 & \textbf{99.4}$\pm$1.9 & \textbf{98.8}$\pm$1.6 & 3.0$\pm$3.9 & 0.6$\pm$0.6 & 0.3$\pm$0.2 & 2.2$\pm$1.1 & 0.2$\pm$0.0 & 8.5$\pm$3.6\\
& \texttt{L2} & \textbf{69.7}$\pm$20.3 & \textbf{57.7}$\pm$12.5 & 42.1$\pm$23.9 & 0.1$\pm$0.2 & 0.2$\pm$0.0 & 0.5$\pm$0.3 & 0.2$\pm$0.0 & 0.3$\pm$0.1 & 8.5$\pm$3.6\\
\texttt{walker2d}& \texttt{L3} & \textbf{41.9}$\pm$23.1 & \textbf{56.7}$\pm$26.2 & 23.4$\pm$20.6 & 0.7$\pm$0.6 & 0.8$\pm$1.1 & 0.4$\pm$0.2 & 0.2$\pm$0.0 & 0.3$\pm$0.1 & 8.5$\pm$3.6\\
& \texttt{L4} & 26.3$\pm$17.2 & \textbf{49.5}$\pm$17.4 & \textbf{39.8}$\pm$22.4 & 0.2$\pm$0.2 & 0.2$\pm$0.1 & 0.5$\pm$0.4 & 0.2$\pm$0.1 & 0.2$\pm$0.1 & 8.5$\pm$3.6\\
\midrule
& \texttt{L1} & \textbf{91.9}$\pm$3.6 & \textbf{89.0}$\pm$5.3 & 77.9$\pm$8.7 & 12.2$\pm$2.4 & 61.7$\pm$4.9 & 21.3$\pm$1.2 & 66.2$\pm$11.2 & 21.3$\pm$1.1 & -8.3$\pm$4.3\\
& \texttt{L2} & \textbf{75.2}$\pm$5.8 & \textbf{82.1}$\pm$7.1 & 70.5$\pm$4.2 & 15.6$\pm$2.3 & 50.8$\pm$6.1 & 20.6$\pm$1.8 & 54.4$\pm$4.9 & 19.9$\pm$1.6 & -8.3$\pm$4.3\\
\texttt{ant}& \texttt{L3} & \textbf{58.7}$\pm$7.1 & \textbf{59.6}$\pm$9.0 & 49.9$\pm$2.9 & 17.0$\pm$0.9 & 38.1$\pm$5.2 & 18.8$\pm$4.7 & 37.6$\pm$3.0 & 22.6$\pm$0.1 & -8.3$\pm$4.3\\
& \texttt{L4} & \textbf{43.2}$\pm$7.2 & \textbf{41.3}$\pm$4.0 & -5.3$\pm$41.4 & 13.6$\pm$1.7 & 29.0$\pm$3.9 & 22.4$\pm$0.2 & 28.0$\pm$3.1 & 22.3$\pm$0.3 & -8.3$\pm$4.3\\
\midrule
\midrule
& \texttt{L1} & \textbf{24.2}$\pm$17.6 & \textbf{27.3}$\pm$13.9 & 4.1$\pm$3.6 & 0.2$\pm$0.0 & 0.4$\pm$0.1 & 0.3$\pm$0.0 & 0.4$\pm$0.0 & 0.3$\pm$0.0 & 4.8$\pm$2.9\\
\texttt{hammer}& \texttt{L2} & \textbf{18.3}$\pm$12.0 & \textbf{18.4}$\pm$13.6 & 17.3$\pm$8.9 & 0.2$\pm$0.0 & 0.3$\pm$0.0 & 0.5$\pm$0.4 & 0.3$\pm$0.1 & 0.3$\pm$0.0 & 4.8$\pm$2.9\\
& \texttt{L3} & \textbf{19.5}$\pm$15.5 & \textbf{20.6}$\pm$12.8 & 14.1$\pm$10.2 & 0.2$\pm$0.0 & 0.4$\pm$0.0 & 0.3$\pm$0.0 & 0.4$\pm$0.0 & 0.3$\pm$0.0 & 4.8$\pm$2.9\\
\midrule
& \texttt{L1} & \textbf{48.0}$\pm$2.3 & \textbf{50.9}$\pm$5.3 & 40.9$\pm$13.7 & -0.1$\pm$0.0 & -0.2$\pm$0.0 & 6.4$\pm$2.0 & -0.2$\pm$0.0 & 7.3$\pm$2.2 & 0.2$\pm$0.3\\
\texttt{relocate}& \texttt{L2} & \textbf{45.1}$\pm$5.7 & \textbf{52.4}$\pm$7.7 & 43.0$\pm$8.2 & -0.2$\pm$0.0 & 5.0$\pm$2.4 & -0.2$\pm$0.0 & -0.2$\pm$0.0 & 4.5$\pm$2.1 & 0.2$\pm$0.3\\
& \texttt{L3} & \textbf{39.0}$\pm$4.5 & \textbf{43.3}$\pm$7.8 & 27.1$\pm$6.9 & -0.2$\pm$0.0 & -0.2$\pm$0.0 & 2.9$\pm$1.9 & -0.2$\pm$0.0 & 3.3$\pm$3.0 & 0.2$\pm$0.3\\
\bottomrule
\end{tabular}}
\end{center}
\caption{\footnotesize Results for four MuJoCo environments \texttt{halfcheetah, hopper, walker2d} and \texttt{ant} and two Adroit environments \texttt{hammer} and \texttt{relocate} from D4RL~\citep{fu2020d4rl}. Numbers are averaged across 5 seeds, $\pm$ the 95\%-confidence interval. We bold the top 2 highest performances. Either \methodshort{} or \methodshortdrc{} achieves the best performance in each of 22 settings and outperforms the strongest baseline DemoDICE by a large margin in \texttt{L3} and \texttt{L4} settings where the offline data is highly suboptimal, suggesting the importance of using a relaxed distribution matching regularization.}
\normalsize
\label{tbl:mujoco_results}
\end{table*}

\paragraph{Environments, Datasets and Task Construction.}
In order to answer these questions, we consider offline datasets of four MuJoCo~\citep{todorov2012mujoco} locomotion environments (\texttt{hopper}, \texttt{halfcheetah}, \texttt{walker2d} and \texttt{ant}) and two Adroit robotic manipulation environments (\texttt{hammer} and \texttt{relocate}) from the standard offline RL benchmark D4RL~\citep{fu2020d4rl}. To construct settings where we have varying data quality of the suboptimal offline dataset, for MuJoCo tasks, we use $1$ trajectory from the \texttt{expert-v2} datasets as $\gD^E$ for each environment and create the suboptimal offline data $\gD^U$ by mixing $N^E$ transitions from \texttt{expert-v2} datasets and $N^R$ transitions from \texttt{random-v2} datasets with $4$ different ratios. We denote these settings as \texttt{L1} (Level 1), \texttt{L2} (Level 2), \texttt{L3} (Level 3) and \texttt{L4} (Level 4), which correspond to $\frac{N^E}{N^R} \approx 0.2, 0.15, 0.1, 0.05$ respectively. The higher the level, the more 
challenging the setting is. Note that all of the four settings are of much more suboptimal data composition compared to the data configuration used for $\gD^U$ adopted in \cite{kim2021demodice}, where $\gD^U$ in the most imperfect setting 
can have $\frac{N^E}{N^R} > 10.0$, e.g. on \texttt{walker2d}. 
The rationale of constructing such challenging datasets is that in practice, it is much cheaper to generate suboptimal and even random data and therefore a successful offline IL method should be equipped with the capacity of tackling these suboptimal offline datasets. In order to excel at \texttt{L1}, \texttt{L2}, \texttt{L3} and \texttt{L4}, a successful algorithm must effectively leverage $\gD^U$ to provide proper regularization for policy optimization. For Adroit tasks, following similar design choice, we construct three levels of data compositions, i.e. \texttt{L1}, \texttt{L2} and \texttt{L3}. Please see the appendix for details.

\paragraph{Comparisons.} To answer these questions, we first consider the following prior approaches. We compare RelaxDICE to \textbf{DemoDICE}~\citep{kim2021demodice}, which performs offline imitation learning with supplementary imperfect demonstrations via applying a KL constraint between the occupancy measure of the learned policy and that of the behavior policy. We also consider \textbf{BCND}~\citep{sasaki2020behavioral} as a baseline, which learns an ensemble of policies via a weighted BC objective on noisy demonstrations. Moreover, we compare to \textbf{BC}$(\eta)$~\citep{kim2021demodice}, where $\eta \in \{0, 0.5, 1.0\}$ corresponds to a weight factor that balances between minimizing the negative log-likelihood on expert data $\gD^E$ and minimizing the negative log-likelihood on
suboptimal offline data $\gD^U$: $\min_{\pi} L_{\text{BC}(\eta)}(\pi) \defeq - \eta\frac{1}{|\gD^E|}\sum_{(s, a)\in \gD^E} \log \pi(a|s) - (1-\eta)\frac{1}{|\gD^U|}\sum_{(s, a) \in \gD^U} \log \pi(a|s)$.

Finally, we consider the importance-weighted BC$(\eta)$ denoted as \textbf{BC-DRC}$(\eta)$, i.e. BC with density ratio correction, where we train a classifier to approximate the density ratio $d^E/d^U$ as $\rhat$ via Eq.~(\ref{eq:binary-classification})-(\ref{eq:link-function}), and perform weighted BC$(\eta)$ using $\rhat$ as the importance weights: $\min_{\pi} L_{\text{BC-DRC}(\eta)}(\pi) \defeq -\eta\frac{1}{|\gD^E|}\sum_{(s, a) \in \gD^E} \log \pi(a|s) - (1-\eta)\frac{1}{|\gD^U|}\sum_{(s, a)\in \gD^U} \rhat(s,a) \log \pi(a|s)$.

For all the tasks, we use $\alpha=0.2$ for \methodshort{} and use $\alpha=0.05$ for DemoDICE as suggested in \cite{kim2021demodice}, which is also verified in our experiments. We pick $\alpha$ and $\beta$ for \methodshortdrc{} via grid search, which we will discuss in the appendix. For more details of the experiment set-ups, evaluation protocols, hyperparameters and practical implementations, please see the appendix.

\subsection{Results of Empirical Evaluations}
\label{sec:empirical_results}

To answer question (1) and (2), we evaluate \methodshort{}, \methodshortdrc{} and other approaches discussed above on 6 D4RL environments (4 MuJoCo locomotion tasks and 2 Adroit robotic manipulation tasks) with 22 different settings in total.
We present the full results in Table~\ref{tbl:mujoco_results}.

As shown in Table~\ref{tbl:mujoco_results}, 
\methodshortdrc{} achieves the best performance in 18 out of 22 tasks whereas \methodshort{} excels in the remaining 4 settings. It is also worth noting that \methodshort{} outperforms the strongest baseline DemoDICE in 20 out of 22 settings, without requiring tuning two hyperparameters as in \methodshortdrc{}. 
Overall, we observe that the best performing method (either \methodshort{} or \methodshortdrc{}) achieves over 30\% performance improvement on average over DemoDICE.
Moreover, in settings where the offline data is highly suboptimal, e.g. \texttt{L3} and \texttt{L4}, both \methodshort{} and \methodshortdrc{} can significantly outperform DemoDICE
except on \texttt{walker2d-L4} where \methodshort{} is a bit worse than DemoDICE but \methodshortdrc{} prevails. In particular, on high-dimensional locomotion tasks such as \texttt{ant} and complex manipulation tasks such as \texttt{hammer}, \methodshort{} and \methodshortdrc{} outperform DemoDICE by a significant margin on hard datasets such as \texttt{L3} and \texttt{L4}. These suggest that using a less conservative support regularization can be crucial in cases with extremely low-quality offline data, supporting our theoretical analysis.

\subsection{Sensitivity of Hyperparameters
}
\label{sec:ablations}
To answer question (3), we perform an ablation study on the sensitivity of the hyperparameter $\alpha$ in \methodshort{} and DemoDICE~\citep{kim2021demodice}, which controls the strength of the regularization between the learned policy and the behavior policy. We pick two continuous-control tasks \texttt{halfcheetah} and \texttt{walker2d}
and evaluate the performance of \methodshort{} and DemoDICE using $\alpha \in \{0.05, 0.1, 0.2, 0.3, 0.4, 0.5\}$ on all four settings in each of the two tasks. As shown in Figure~\ref{fig:ablation} 
in the appendix, \methodshort{} is much more robust w.r.t. $\alpha$ compared to DemoDICE in all of the 8 scenarios as \methodshort{} remains roughly a flat line in all eight plots and the performance of DemoDICE drops significantly as $\alpha$ increases. We think the reason is that DemoDICE employs a conservative exact distribution matching constraint and therefore requires different values of $\alpha$ on datasets with different data quality to find the delicate balance between policy optimization based on limited $\gD^E$ and regularization from suboptimal $\gD^U$, e.g. higher $\alpha$ when the data quality is high and lower $\alpha$ when the data is highly suboptimal. In contrast, \methodshort{} imposes a relaxed support regularization, which is less conservative and therefore less sensitive w.r.t. data quality. Since tuning hyperparameters for offline IL / RL in a fully offline manner remains an open problem and often requires expensive online samples~\citep{monier2020offline,kumar2021should,yu2021combo,kurenkov2021showing}, we believe \methodshort{}'s robustness w.r.t. the hyperparameters should significantly benefit practitioners.

\section{Conclusion and Discussion}\label{sec:discussion}
We present \methodshort{}, a novel offline imitation learning methods for learning policies from limited expert data and supplementary imperfect data. Different from prior works using regularizations originally designed for exact distribution matching, we employ an asymmetrically relaxed $f$-divergence as a more forgiving regularization that proves effective even for settings where the imperfect data is highly suboptimal. Both \methodshort{} and its extension \methodshortdrc{} can avoid unstable min-max optimization of the regularized stationary state-action distribution matching problem by supporting a closed-form solution of the inner maximization problem, and show superior performance to strong baselines in our extensive empirical study. 

\section{Acknowledgments}
This work was supported by NSF (\#1651565), AFOSR (FA95501910024), ARO (W911NF-21-1-0125), ONR, DOE, CZ Biohub, and Sloan Fellowship.

\bibliography{bib}

\newpage
\appendix
\onecolumn

\section{Derivation for \methodshort{}}\label{app:derivation-relaxdice}
We propose to use the relaxed $f$-divergence to realize the regularization $\Omega(d, d^U)$ and aim to solve the following constrained optimization problem \emph{in an offline fashion}:
\begin{align}
    & \max_{d \geq 0}~ - \KL(d \| d^E) - \alpha \DFbeta(d \| d^U)
    \label{eq-app:objective-constrained-optimization-relaxed}\\
    & \mathrm{s.t.}~~ \sum_a d(s,a) = (1 - \gamma) p_0(s) + \gamma \sum_{s', a'} T(s|s',a') d(s', a'), \forall s \in \gS. 
    \label{eq-app:bellman-flow-constrained-optimization-relaxed}
\end{align}
For notation simplicity, we define $(\gB d) (s) \defeq \sum_a d(s,a)$, $(\gT d) (s) \defeq \sum_{s', a'} T(s|s',a') d(s', a')$ and $(\gT v) (s,a) \defeq \sum_{s'} T(s'|s,a) v(s')$. 
First, we can obtain the following Lagrangian for above constrained optimization problem (with $v(s)$ being the Lagrange multipliers):
\begin{align}
    \max_{d \geq 0} \min_{v}~ L_{\alpha, \beta}(d, v) \defeq
    & - \KL(d \| d^E) - \alpha \DFbeta(d \| d^U) \nonumber
    \\
    & + \sum_{s} v(s)((1 - \gamma) p_0(s) + \gamma (\gT d) (s) - (\gB d) (s)) \label{eq-app:lagrangian}
\end{align}
Plugging in the definitions of KL divergence and relaxed $f$-divergence in Definition~\ref{def:relaxed-f}, $L_{\alpha, \beta}(d, v)$ in \eqref{eq-app:lagrangian} can be written as:
\begin{align}
    L_{\alpha, \beta}(d, v) =&~ \E_{d(s,a)} \left[\log \frac{d^E(s,a) \cdot d^U(s,a)}{d^U(s,a) \cdot d(s,a)}\right] - \alpha \E_{d^U(s,a)} \left[\Fbeta\left(\frac{d(s,a)}{d^U(s,a)}\right)\right] \nonumber\\
    & + (1 - \gamma) \E_{p_0(s)} [v(s)] + \E_{d(s,a)} [\gamma (\gT v)(s,a) - v(s)] 
    \label{eq-app:L-alpha-beta-1}
    \\
    =&~ \E_{d^U(s,a)} \left[ \frac{d(s,a)}{d^U(s,a)} \left(\log \frac{d^E(s,a)}{d^U(s,a)} + \gamma (\gT v)(s,a) - v(s) - \log \frac{d(s,a)}{d^U(s,a)}\right) \right] \nonumber\\
    & - \alpha \E_{d^U(s,a)} \left[\Fbeta\left(\frac{d(s,a)}{d^U(s,a)}\right)\right] + (1 - \gamma) \E_{p_0(s)} [v(s)]
    \label{eq-app:L-alpha-beta-2}
\end{align}
where \eqref{eq-app:L-alpha-beta-1} uses the fact that 
$\sum_s v(s) (\gT d)(s) = \sum_{s,a} d(s,a) (\gT v)(s,a)$,
and the density ratio $d^E / d^U$ can be estimated via Eq.~(\ref{eq:binary-classification})-(\ref{eq:link-function}); and \eqref{eq-app:L-alpha-beta-2} uses importance sampling to change the expectation w.r.t. $d$ to an expectation w.r.t. $d^U$ for offline learning. 

Define $e_v(s,a) \defeq \log \frac{d^E(s,a)}{d^U(s,a)} + \gamma (\gT v)(s,a) - v(s)$ and use a change of variable $\omega(s,a) = \frac{d(s,a)}{d^U(s,a)}$, we obtain the following optimization problem:
\begin{align}
    \max_{\omega \geq 0} \min_{v}~ L_{\alpha,\beta}(\omega, v) \defeq~ & \E_{d^U(s,a)} \left[\omega(s,a) e_v(s,a) - \omega(s,a) \log(\omega(s,a)) - \alpha \Fbeta(\omega(s,a))\right] \nonumber \\
    & + (1 - \gamma) \E_{p_0(s)} [v(s)] 
    \label{eq-app:L-omega-v-minimax}
\end{align}
which can be estimated only using offline datasets $\gD^E$ and $\gD^U$ (assuming $\gD^U$ additionally contains a set of initial states sampled from $p_0$).

\textbf{Remark.} We note that DemoDICE~\citep{kim2021demodice} is a special case of \methodshort{} when $\beta \to 0$, which can be verified according to Definition~\ref{def:relaxed-f} and Theorem~\ref{the:closed-form-1} (we will always be in the first condition of the piecewise function as $\log \beta + 1 \to - \infty$ when $\beta \to 0$).

\section{Derivation for \methodshortdrc{}}\label{app:derivation-relaxdice-drc}
As discussed before, another attractive choice for realizing the regularization is the relaxed $f$-divergence between $d$ and the density-ratio-corrected behavior occupancy measure $\rhat \cdot d^U$.

Let $\Omega(d, d^U) = \DFbeta(d \| \hat{r} \cdot d^U)$ and we aim to solve the following constrained optimization problem \emph{in an offline fashion}:
\begin{align}
    & \max_{d \geq 0}~ - \KL(d \| d^E) - \alpha \DFbeta(d \| \hat{r} \cdot d^U)
    \label{eq-app:objective-constrained-optimization-relaxed-drc}\\
    & \mathrm{s.t.}~~ \sum_a d(s,a) = (1 - \gamma) p_0(s) + \gamma \sum_{s', a'} T(s|s',a') d(s', a'), \forall s \in \gS. 
    \label{eq-app:bellman-flow-constrained-optimization-relaxed-drc}
\end{align}

Similar to the derivation of \methodshort{}, we can obtain the following Lagrangian for the constrained optimization problem in Eq.(\ref{eq-app:objective-constrained-optimization-relaxed-drc})-(\ref{eq-app:bellman-flow-constrained-optimization-relaxed-drc}) (with $v(s)$ being the Lagrange multipliers):
\begin{align}
    L^\dagger_{\alpha, \beta}(d, v)=&~ \E_{d^U(s,a)} \left[ \frac{d(s,a)}{d^U(s,a)} \left(e_v(s,a) - \log \frac{d(s,a)}{d^U(s,a)}\right) \right] \nonumber \\
    &- \alpha \E_{d^U(s,a)} \left[
    \hat{r}(s,a) \cdot
    \Fbeta\left(\frac{d(s,a)}{\hat{r}(s,a) \cdot d^U(s,a)}\right)\right] + (1 - \gamma) \E_{p_0(s)} [v(s)]
\end{align}

Similarly, we use a change of variable $\omega(s,a) = \frac{d(s,a)}{d^U(s,a)}$ and apply strong duality to obtain the following min-max problem over $\omega$:
\begin{align}
    \min_{v} \max_{\omega \geq 0}~ L^\dagger_{\alpha,\beta}(\omega, v) \defeq~& \E_{d^U(s,a)} \left[\omega(s,a) e_v(s,a) - \omega(s,a) \log(\omega(s,a)) - \alpha \hat{r}(s,a) \Fbeta\left(\frac{\omega(s,a)}{\hat{r}(s,a)}\right)\right] \nonumber \\
    & + (1 - \gamma) \E_{p_0(s)} [v(s)] 
    \label{eq-app:L-omega-v-minimax-drc}
\end{align}

\section{Proofs}\label{app:proofs}
\begin{lemma}
For distributions $p$ and $q$ defined on domain $\gX$, if $\frac{p(x)}{q(x)} < \beta,~\forall x \in \gX$, then the relaxed $f$-divergence $\DFbeta(p \| q) = 0$.
\end{lemma}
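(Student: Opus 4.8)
The plan is to exploit the piecewise structure of $\Fbeta$ directly. Under the hypothesis $p(x)/q(x) < \beta$ for every $x \in \gX$, the argument fed to $\Fbeta$ never reaches the threshold $\beta$, so only the \emph{linear} branch of Definition~\ref{def:relaxed-f} is ever invoked, namely $\Fbeta(u) = f'(\beta) u - f'(\beta) = f'(\beta)(u - 1)$. The divergence then reduces to a linear functional of $p - q$, which vanishes because both are normalized. So the whole argument is: (i) reduce to the linear branch, (ii) pull out the constant $f'(\beta)$, (iii) integrate $p - q$ to zero.

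In carrying this out I would first dispense with null sets: wherever $q(x) = 0$ we must also have $p(x) = 0$, since otherwise $p(x)/q(x) = +\infty \not< \beta$; on such points the integrand $q(x)\Fbeta(p(x)/q(x))$ is $0$ by the usual convention and contributes nothing. On the set $\{q > 0\}$ the hypothesis gives $0 \le p(x)/q(x) < \beta$, so by Definition~\ref{def:relaxed-f},
\begin{equation*}
    \Fbeta\!\left(\frac{p(x)}{q(x)}\right) = f'(\beta)\left(\frac{p(x)}{q(x)} - 1\right).
\end{equation*}
Substituting into the definition of $\DFbeta$ and using that $p, q$ are probability densities,
\begin{align*}
    \DFbeta(p \| q)
    &= \int_\gX q(x)\,\Fbeta\!\left(\frac{p(x)}{q(x)}\right) \diff x
     = f'(\beta)\int_\gX q(x)\left(\frac{p(x)}{q(x)} - 1\right)\diff x \\
    &= f'(\beta)\left(\int_\gX p(x)\,\diff x - \int_\gX q(x)\,\diff x\right)
     = f'(\beta)\,(1 - 1) = 0.
\end{align*}

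There is essentially no hard step here: the only point that needs care is that the \emph{strict} inequality $p/q < \beta$ is precisely what keeps us on the linear branch of $\Fbeta$ for \emph{every} $x$, so the genuinely nonlinear term $f(u) + C_{f,\beta}$ (active only when $u \ge \beta$) is never triggered. The scalar $f'(\beta)$ is finite since $f$ is convex on $\sR_+$ and $\beta$ lies in the interior, so pulling it out of the integral is legitimate. If one prefers to avoid the $q(x) = 0$ bookkeeping entirely, one may assume $q$ has full support (consistent with the reachability assumption used elsewhere in the paper), and the reduction to the linear branch is then immediate on all of $\gX$.
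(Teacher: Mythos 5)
Your proof is correct and follows essentially the same route as the paper's: restrict to the linear branch of $\Fbeta$ under the hypothesis $p/q < \beta$, factor out $f'(\beta)$, and use that $p$ and $q$ both integrate to one. The extra care you take with the $q(x)=0$ case is a harmless refinement the paper omits.
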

\begin{proof}
According to Definition~\ref{def:relaxed-f}, if $p/q < \beta$ everywhere, we have:
\begin{align*}
    \DFbeta(p, \| q) &= \int_{\gX} q(x) \Fbeta\left(\frac{p(x)}{q(x)}\right) \mathrm{d}x \\
    &= \int_{\gX} q(x) \left(f'(\beta) \frac{p(x)}{q(x)} - f'(\beta)\right) \mathrm{d}x \\
    &= f'(\beta) \int_{\gX} p(x) \mathrm{d}x - f'(\beta) \int_{\gX} q(x) \mathrm{d}x = 0
\end{align*}
\end{proof}

\closedforminner*
\begin{proof}
Since $\Fbeta$ is a continuous piecewise function, $h(\omega(s,a))$ is also a continuous piecewise function:
\begin{align*}
    h(\omega(s,a)) &= \begin{dcases}
        e_v(s,a) \omega(s,a) - \omega(s,a) \log(\omega(s,a)) - \alpha f(\omega(s,a)) - \alpha C_{f, \beta} & \text{if}~~ \omega(s,a) \geq \beta\\
        e_v(s,a) \omega(s,a) - \omega(s,a) \log(\omega(s,a)) - \alpha f'(\beta) \omega(s,a) + \alpha f'(\beta) & \text{if}~~ \beta \geq \omega(s,a) \geq 0
    \end{dcases}
\end{align*}

When $f(u) = u \log u$ and $f'(u) = \log u + 1$, the gradient of $h(\omega(s,a))$ is given by:
\begin{align*}
    h'(\omega(s,a)) &= \begin{dcases}
        e_v(s,a) - (\alpha + 1) (\log(\omega(s,a)) + 1) & \text{if}~~ \omega(s,a) \geq \beta\\
        e_v(s,a) - \log(\omega(s,a)) - 1 - \alpha (\log \beta + 1) & \text{if}~~ \beta \geq \omega(s,a) \geq 0
    \end{dcases}
\end{align*}

Define $\omega^*_{\leq \beta}(s,a)$ and $\omega^*_{\geq \beta}(s,a)$ as the local maximum for $[0, \beta]$ and $[\beta, +\infty)$ respectively:
\begin{align*}
    \omega^*_{\leq \beta}(s,a) \defeq \argmax_{\beta \geq \omega(s,a) \geq 0} h(\omega(s,a)) ~~\text{and}~~ \omega^*_{\geq \beta}(s,a) \defeq \argmax_{\omega(s,a) \geq \beta} h(\omega(s,a))
\end{align*}
and the overall maximum will be either $\omega^*_{\leq \beta}(s,a)$ or $\omega^*_{\geq \beta}(s,a)$ depending on whose function value is larger (the global maximum must be one of the local maximums).

In the following, we will use the fact that $h'$ is strictly decreasing due to the strict concavity of $h$ (an affine function plus a strictly concave function).

\textbf{(1) When $e_v(s,a) > (\alpha + 1) (\log \beta + 1)$ (or equivalently $h'(\beta) > 0$)}:

For $\beta \geq \omega(s,a) \geq 0$, we know that $h'(\omega(s,a)) \geq h'(\beta) = e_v(s,a) - (\alpha + 1)(\log \beta + 1) > 0$, so $\omega^*_{\leq \beta}(s,a) = \beta$.

For $\omega(s,a) \geq \beta$, since $h'$ is strictly decreasing and $h'(\beta) > 0$, we know that $\omega^*_{\geq \beta}(s,a)$ is attained at $h'(\omega(s,a)) = e_v(s,a) - (\alpha + 1) (\log(\omega(s,a)) + 1) = 0$. Thus $\omega^*_{\geq \beta}(s,a) = \exp\left(\frac{e_v(s,a)}{1 + \alpha} - 1\right)$. 

Moreover, because $h'(\omega(s,a)) > 0$ when $\beta \leq \omega(s,a) < \omega^*_{\geq \beta}(s,a)$, we know that $h(\omega^*_{\leq \beta}(s,a)) = h(\beta) < h(\omega^*_{\geq \beta}(s,a))$, and the overall maximum is $\omega^*(s,a) = \exp\left(\frac{e_v(s,a)}{1 + \alpha} - 1\right) > \beta$.

In this case, the maximum function value of $h$ is:
\begin{align*}
    h(\omega^*(s,a)) &= e_v(s,a) \omega^*(s,a) - \omega^*(s,a) \log(\omega^*(s,a)) - \alpha f(\omega^*(s,a)) - \alpha C_{f, \beta} \\
    &= \omega^*(s,a) \left(e_v(s,a) - (\alpha + 1) \log (\omega^*(s,a))\right) - \alpha C_{f, \beta} \\
    &= (\alpha + 1) \omega^*(s,a) - \alpha C_{f, \beta} \\
    &= (\alpha + 1) \exp\left(\frac{e_v(s,a)}{1 + \alpha} - 1\right) - \alpha C_{f, \beta}
\end{align*}

\textbf{(2) When $e_v(s,a) \leq (\alpha + 1) (\log \beta + 1)$ (or equivalently $h'(\beta) \leq 0$)} :

Since $\lim_{\omega(s,a) \to 0} h'(\omega(s,a)) = + \infty$ and $h'(\beta) \leq 0$, $\omega^*_{\leq \beta}(s,a)$ is attained at $h'(\omega(s,a)) = e_v(s,a) - \log(\omega(s,a)) - 1 - \alpha (\log \beta + 1) = 0$. Thus $\omega^*_{\leq \beta}(s,a) = \exp\left(e_v(s,a) - 1 - \alpha(\log \beta + 1)\right)$.

For $\omega(s,a) \geq \beta$, since $h'$ is strictly decreasing and $h'(\beta) \leq 0$, so $h'(\omega(s,a)) \leq 0$ and $\omega^*_{\geq \beta}(s,a) = \beta$.

Moreover, because $h(\omega^*_{\leq \beta}(s,a)) \geq h(\beta) = h(\omega^*_{\geq \beta}(s,a))$, the overall maximum is $\omega^*(s,a) = \exp\left(e_v(s,a) - 1 - \alpha(\log \beta + 1)\right) \leq \beta$.

In this case, the maximum function value of $h$ is:
\begin{align*}
    h(\omega^*(s,a)) &= e_v(s,a) \omega^*(s,a) - \omega^*(s,a) \log(\omega^*(s,a)) - \alpha f'(\beta) \omega^*(s,a) + \alpha f'(\beta) \\
    &= \omega^*(s,a) \left(e_v(s,a) - \log (\omega^*(s,a)) - \alpha (\log \beta + 1) \right) + \alpha (\log \beta + 1) \\
    &= \omega^*(s,a) + \alpha (\log \beta + 1) \\
    &= \exp\left(e_v(s,a) - 1 - \alpha(\log \beta + 1)\right) + \alpha (\log \beta + 1)
\end{align*}
\end{proof}

\closedformdrc*
\begin{proof}
Since $\Fbeta$ is a continuous differentiable piecewise function, $h^\dagger(\omega(s,a))$ is also a continuous differentiable piecewise function. When $f(u) = u \log u$ and $f'(u) = \log u + 1$, we have:
\begin{align*}
    h^\dagger(\omega(s,a)) &= \begin{dcases}
        e_v(s,a) \omega(s,a) - \omega(s,a) \log(\omega(s,a)) - \alpha \omega(s,a) \log \left( \frac{\omega(s,a)}{\hat{r}(s,a)} \right)  - \alpha C_{f, \beta} \hat{r}(s,a) & \text{if}~~ \frac{\omega(s,a)}{\hat{r}(s,a)} \geq \beta\\
        e_v(s,a) \omega(s,a) - \omega(s,a) \log(\omega(s,a)) - \alpha f'(\beta) \omega(s,a) + \alpha f'(\beta) \hat{r}(s,a) & \text{if}~~ \beta \geq \frac{\omega(s,a)}{\hat{r}(s,a)} \geq 0
    \end{dcases}
\end{align*}
The gradient of $h^\dagger(\omega(s,a))$ is given by:
\begin{align*}
    (h^\dagger)'(\omega(s,a)) &= \begin{dcases}
        e_v(s,a) - (\alpha + 1) (\log(\omega(s,a)) + 1) + \alpha \log \hat{r}(s,a) & \text{if}~~ \omega(s,a) \geq \beta \rhat(s,a)\\
        e_v(s,a) - \log(\omega(s,a)) - 1 - \alpha (\log \beta + 1) & \text{if}~~ \beta \rhat(s,a) \geq \omega(s,a) \geq 0
    \end{dcases}
\end{align*}

Define $\omega^*_{\leq \beta \hat{r}}(s,a)$ and $\omega^*_{\geq \beta \hat{r}}(s,a)$ as the local maximum for $[0, \beta \hat{r}(s,a)]$ and $[\beta \hat{r}(s,a), +\infty)$:
\begin{align*}
    \omega^*_{\leq \beta \rhat}(s,a) \defeq \argmax_{\beta \rhat(s,a) \geq \omega(s,a) \geq 0} h(\omega(s,a)) ~~\text{and}~~ \omega^*_{\geq \beta \rhat}(s,a) \defeq \argmax_{\omega(s,a) \geq \beta \rhat(s,a)} h(\omega(s,a))
\end{align*}
and the overall maximum will be either $\omega^*_{\leq \beta \rhat}(s,a)$ or $\omega^*_{\geq \beta \rhat}(s,a)$ depending on whose function value is larger (the global maximum must be one of the local maximums).

In the following, we will use the fact that $(h^\dagger)'$ is strictly decreasing due to the strict concavity of $h^\dagger$ (an affine function plus a strictly concave function).

\textbf{(1) When $e_v(s,a) > (\alpha + 1) (\log \beta + 1) + \log \rhat(s,a)$ (or equivalently $\hdagger'(\beta \rhat(s,a)) > 0$)}:

For $\beta \rhat(s,a) \geq \omega(s,a) \geq 0$, we know that $\hdagger'(\omega(s,a)) \geq \hdagger'(\beta \rhat(s,a)) = e_v(s,a) - (\alpha + 1)(\log \beta + 1) - \log \rhat(s,a) > 0$, so $\omega^*_{\leq \beta \rhat}(s,a) = \beta \rhat(s,a)$.

For $\omega(s,a) \geq \beta \rhat(s,a)$, since $(h^\dagger)'$ is strictly decreasing and $(h^\dagger)'(\beta \rhat(s,a)) > 0$, we know that $\omega^*_{\geq \beta \rhat}(s,a)$ is attained at $\hdagger'(\omega(s,a)) = e_v(s,a) - (\alpha + 1) (\log(\omega(s,a)) + 1) + \alpha \log \rhat(s,a) = 0$. Thus $\omega^*_{\geq \beta \rhat}(s,a) = \exp\left(\frac{e_v(s,a) + \alpha \log \rhat(s,a)}{1 + \alpha} - 1\right)$. 

Moreover, because 
$h(\omega^*_{\leq \beta \rhat}(s,a)) = h(\beta \rhat(s,a)) < h(\omega^*_{\geq \beta \rhat}(s,a))$, and the overall maximum is $\omega^*(s,a) = \exp\left(\frac{e_v(s,a) + \alpha \log \rhat(s,a)}{1 + \alpha} - 1\right) > \beta \rhat(s,a)$.

In this case, the maximum function value of $h$ is:
\begin{align*}
    h(\omega^*(s,a)) &= e_v(s,a) \omega^*(s,a) - \omega^*(s,a) \log(\omega^*(s,a)) - \alpha \omega^*(s,a) \log \left( \frac{\omega^*(s,a)}{\hat{r}(s,a)} \right)  - \alpha C_{f, \beta} \hat{r}(s,a) \\
    &= \omega^*(s,a) \left(e_v(s,a) - (\alpha + 1) \log (\omega^*(s,a)) + \alpha \log(\rhat(s,a))\right) - \alpha C_{f, \beta} \rhat(s,a) \\
    &= (\alpha + 1) \omega^*(s,a) - \alpha C_{f, \beta} \rhat(s,a) \\
    &= (\alpha + 1) \exp\left(\frac{e_v(s,a) + \alpha \log \rhat(s,a)}{1 + \alpha} - 1\right) - \alpha C_{f, \beta} \rhat(s,a)
\end{align*}

\textbf{(2) When $e_v(s,a) \leq (\alpha + 1) (\log \beta + 1) + \log \rhat(s,a)$ (or equivalently $\hdagger'(\beta \rhat(s,a)) \leq 0$)} :

Since $\lim_{\omega(s,a) \to 0} \hdagger'(\omega) = + \infty$ and $\hdagger'(\beta \rhat(s,a)) \leq 0$, $\omega^*_{\leq \beta \rhat}(s,a)$ is attained at $\hdagger'(\omega(s,a)) = e_v(s,a) - \log(\omega(s,a)) - 1 - \alpha (\log \beta + 1) = 0$. Thus $\omega^*_{\leq \beta \rhat}(s,a) = \exp\left(e_v(s,a) - 1 - \alpha(\log \beta + 1)\right)$.

For $\omega(s,a) \geq \beta \rhat(s,a)$, since $\hdagger'$ is strictly decreasing and $\hdagger'(\beta \rhat(s,a)) \leq 0$, so $\hdagger'(\omega(s,a)) \leq 0$ and $\omega^*_{\geq \beta \rhat}(s,a) = \beta \rhat(s,a)$.

Moreover, because $h(\omega^*_{\leq \beta \rhat}(s,a)) \geq h(\beta \rhat(s,a)) = h(\omega^*_{\geq \beta \rhat}(s,a))$, the overall maximum is $\omega^*(s,a) = \exp\left(e_v(s,a) - 1 - \alpha(\log \beta + 1)\right)$.

In this case, the maximum function value of $h$ is:
\begin{align*}
    h(\omega^*(s,a)) &= e_v(s,a) \omega^*(s,a) - \omega^*(s,a) \log(\omega^*(s,a)) - \alpha f'(\beta) \omega^*(s,a) + \alpha f'(\beta) \hat{r}(s,a) \\
    &= \omega^*(s,a) \left(e_v(s,a) - \log (\omega^*(s,a)) - \alpha (\log \beta + 1) \right) + \alpha (\log \beta + 1) \rhat(s,a) \\
    &= \omega^*(s,a) + \alpha (\log \beta + 1)\rhat(s,a) \\
    &= \exp\left(e_v(s,a) - 1 - \alpha(\log \beta + 1)\right) + \alpha (\log \beta + 1)\rhat(s,a)
\end{align*}
\end{proof}

\newpage
\section{Additional Experimental Results and Details}\label{app:details}

\subsection{Sensitivity of Hyperparameters
for Controlling Regularization Strength}
\begin{figure*}[h]
\centering
\includegraphics[width=.245\textwidth]{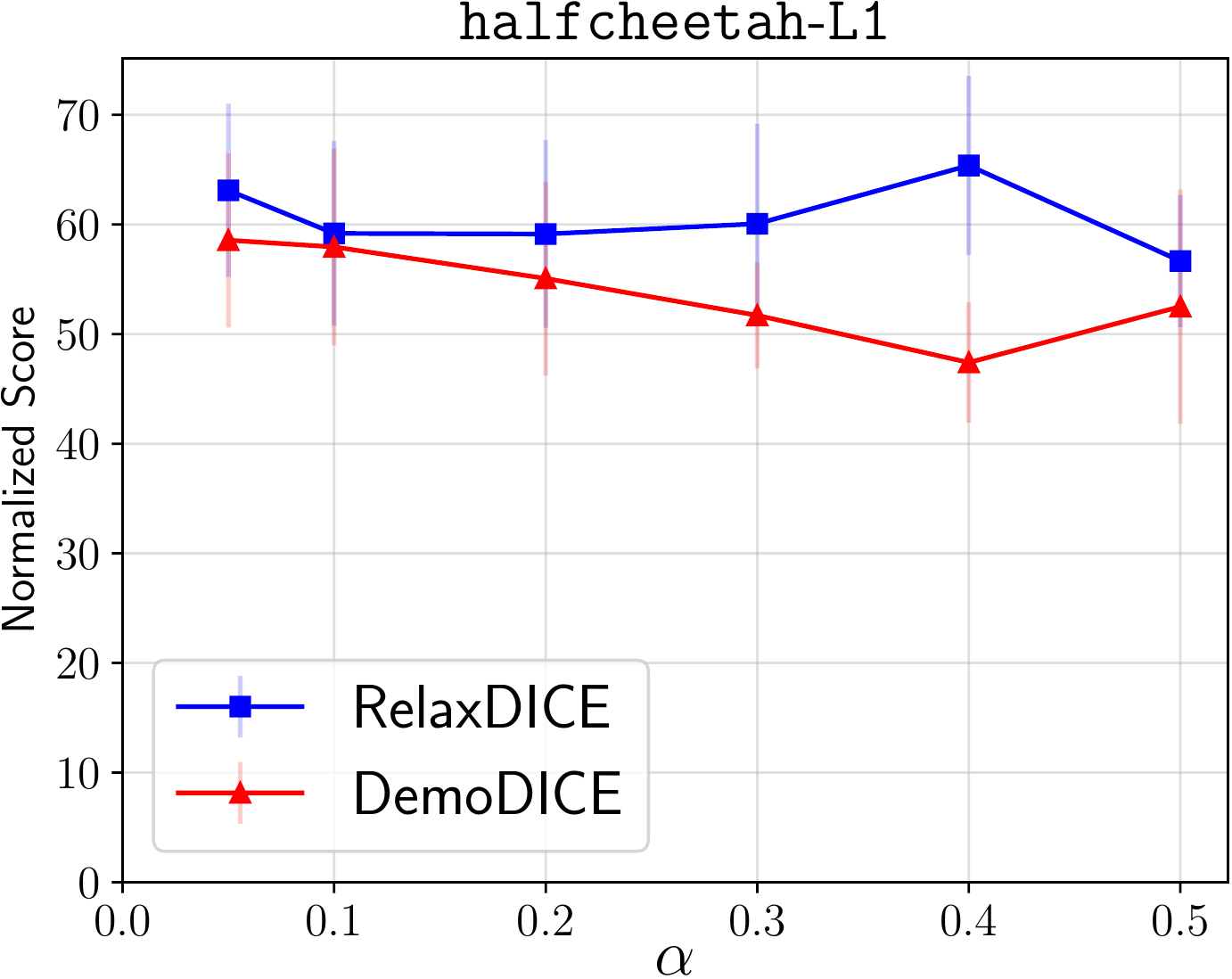}
\includegraphics[width=.245\textwidth]{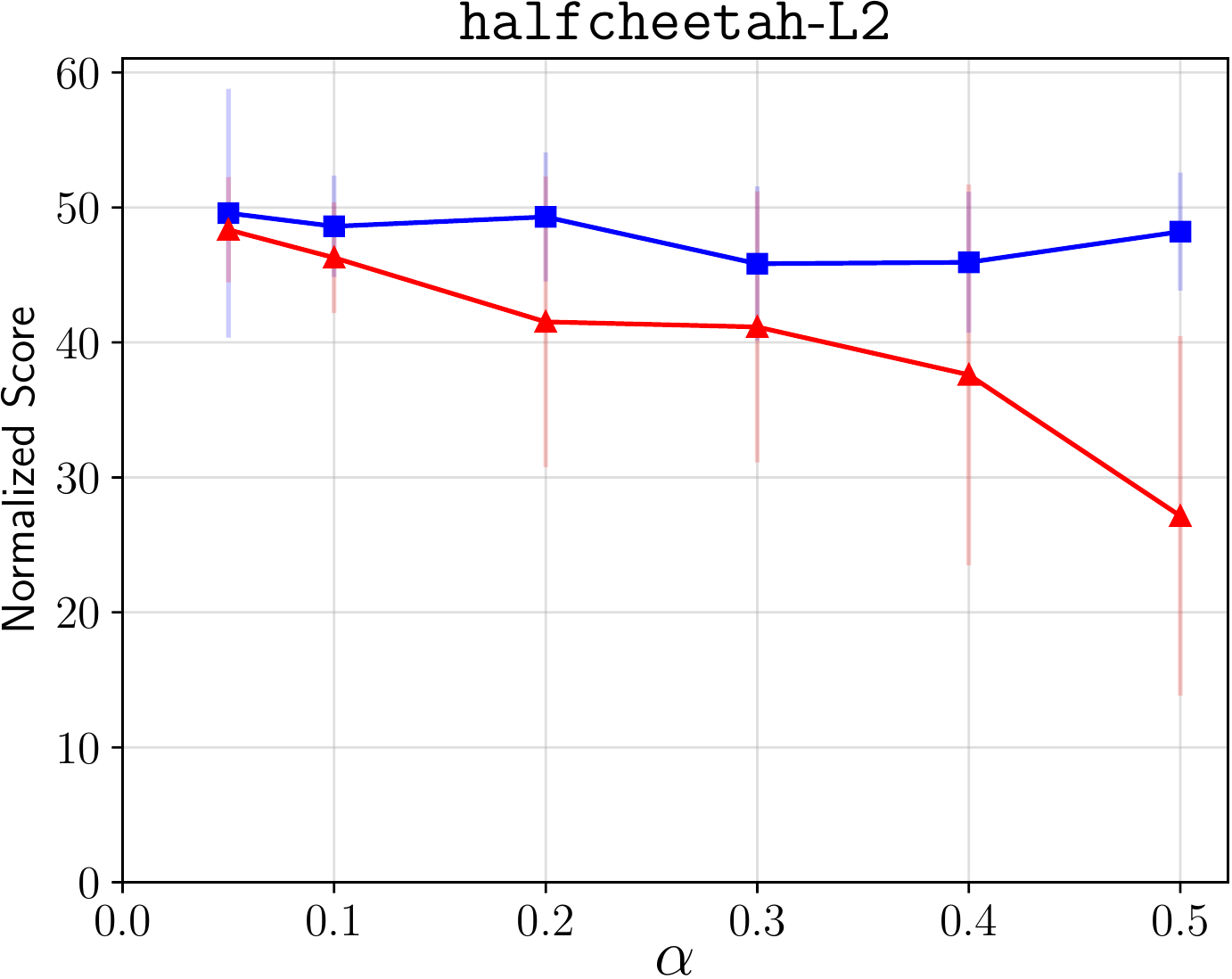}
\includegraphics[width=.245\textwidth]{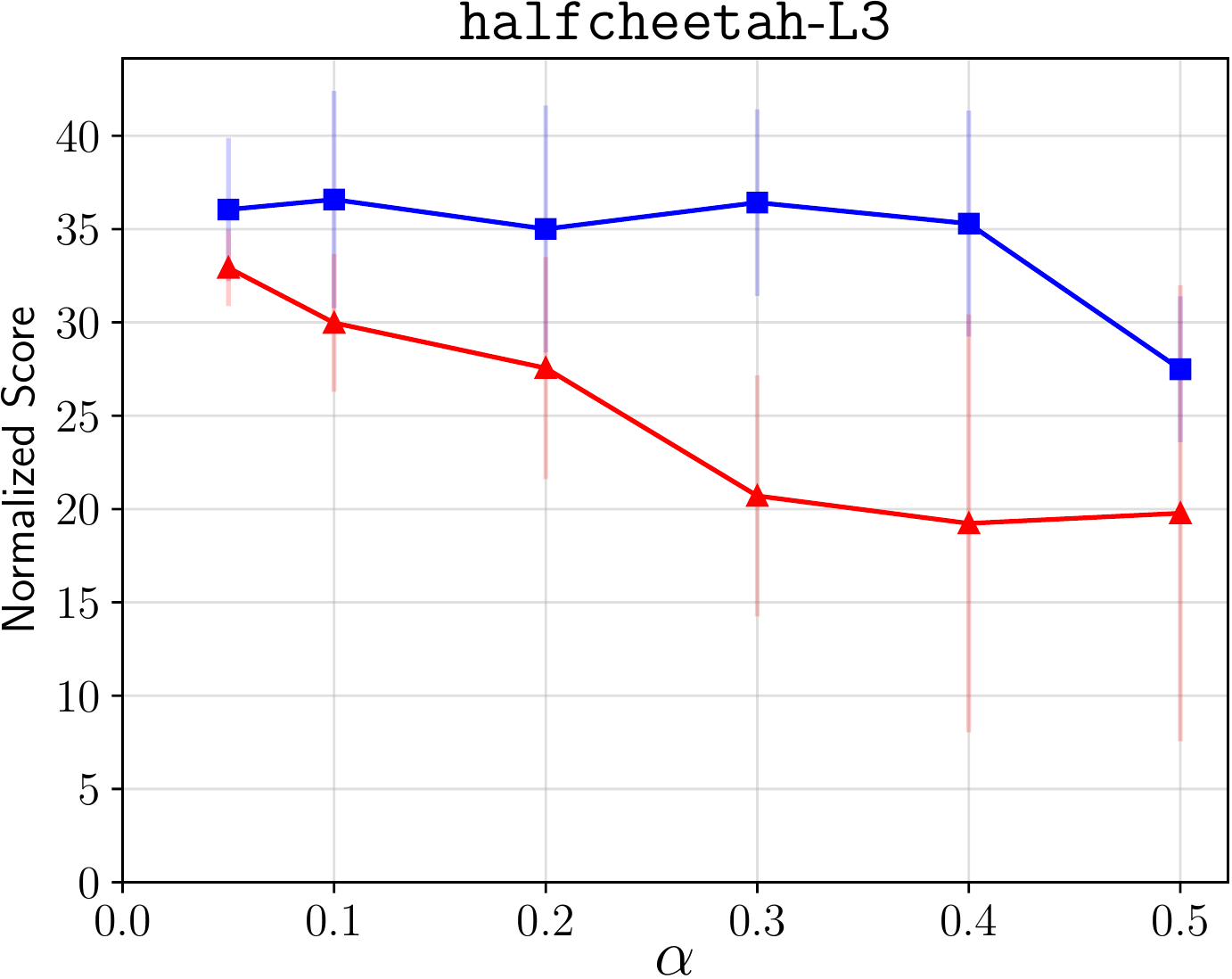}
\includegraphics[width=.245\textwidth]{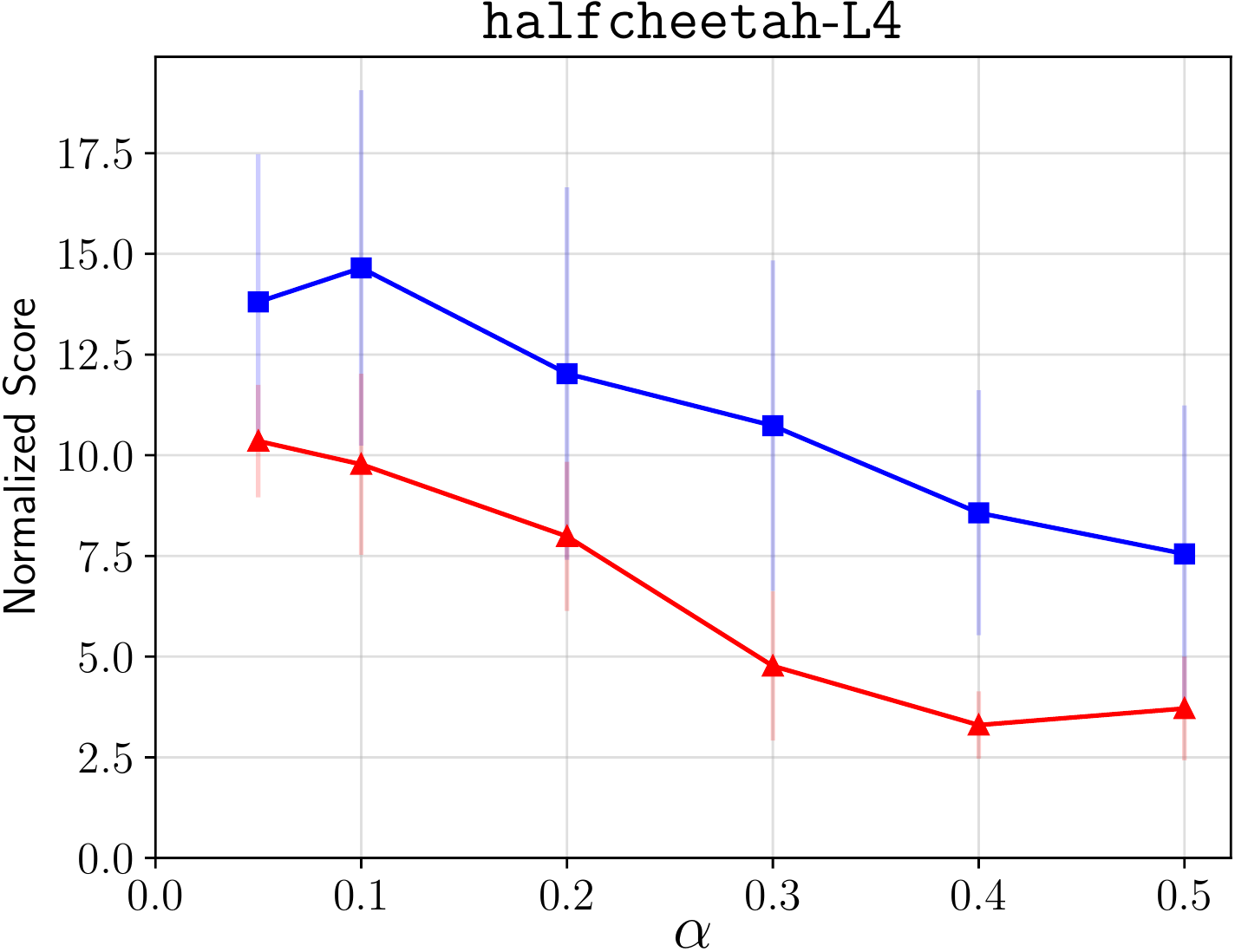}
\includegraphics[width=.245\textwidth]{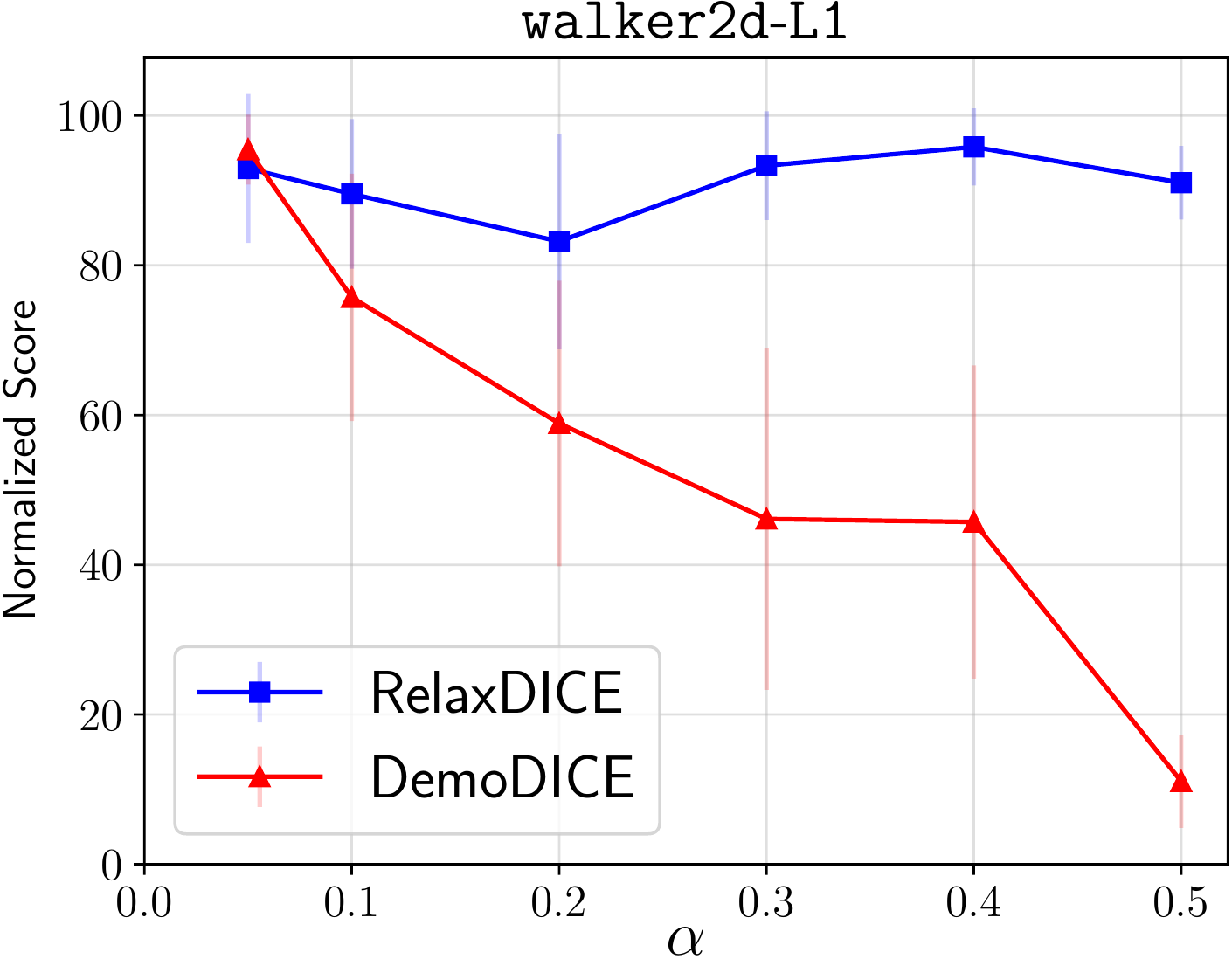}
\includegraphics[width=.245\textwidth]{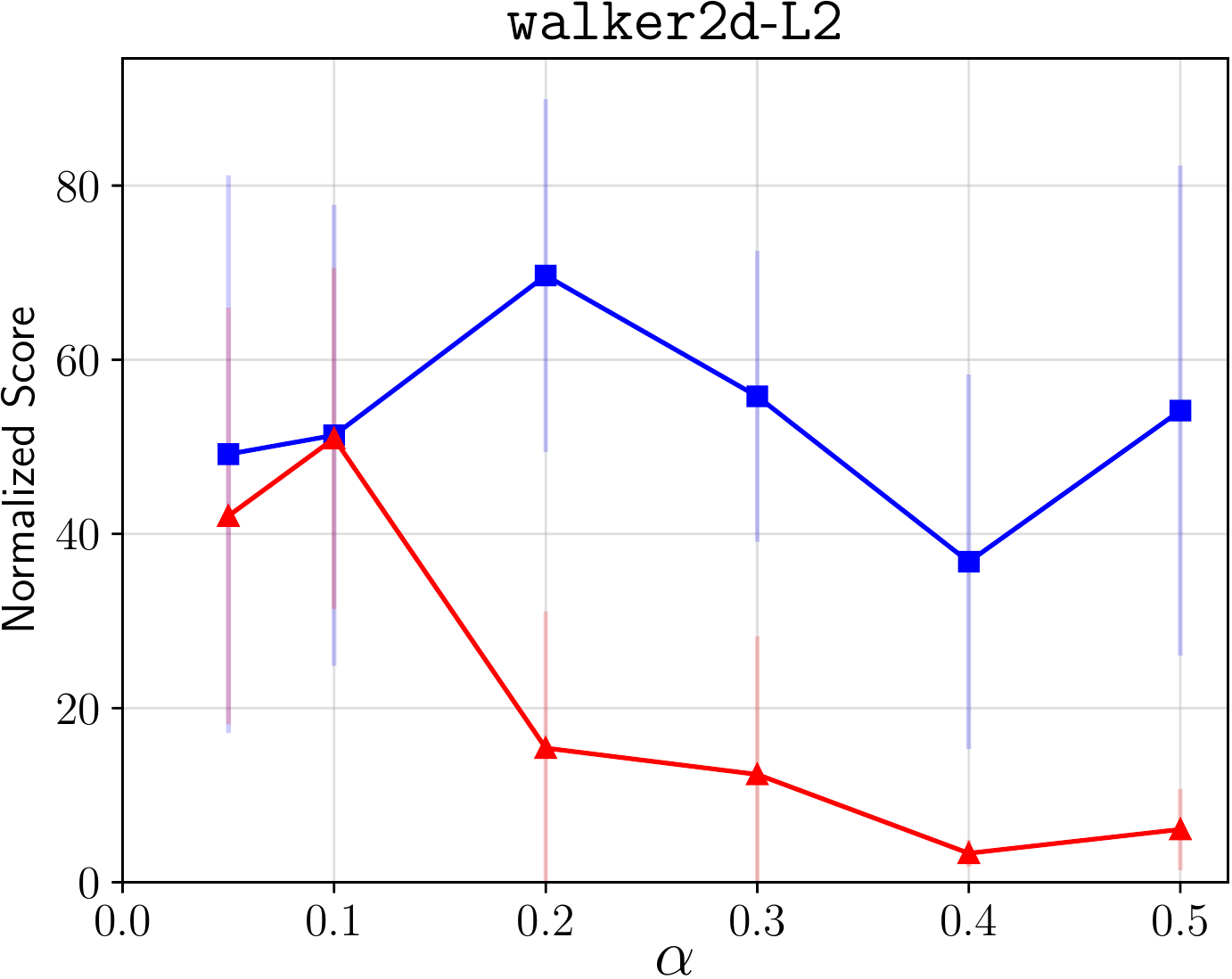}
\includegraphics[width=.245\textwidth]{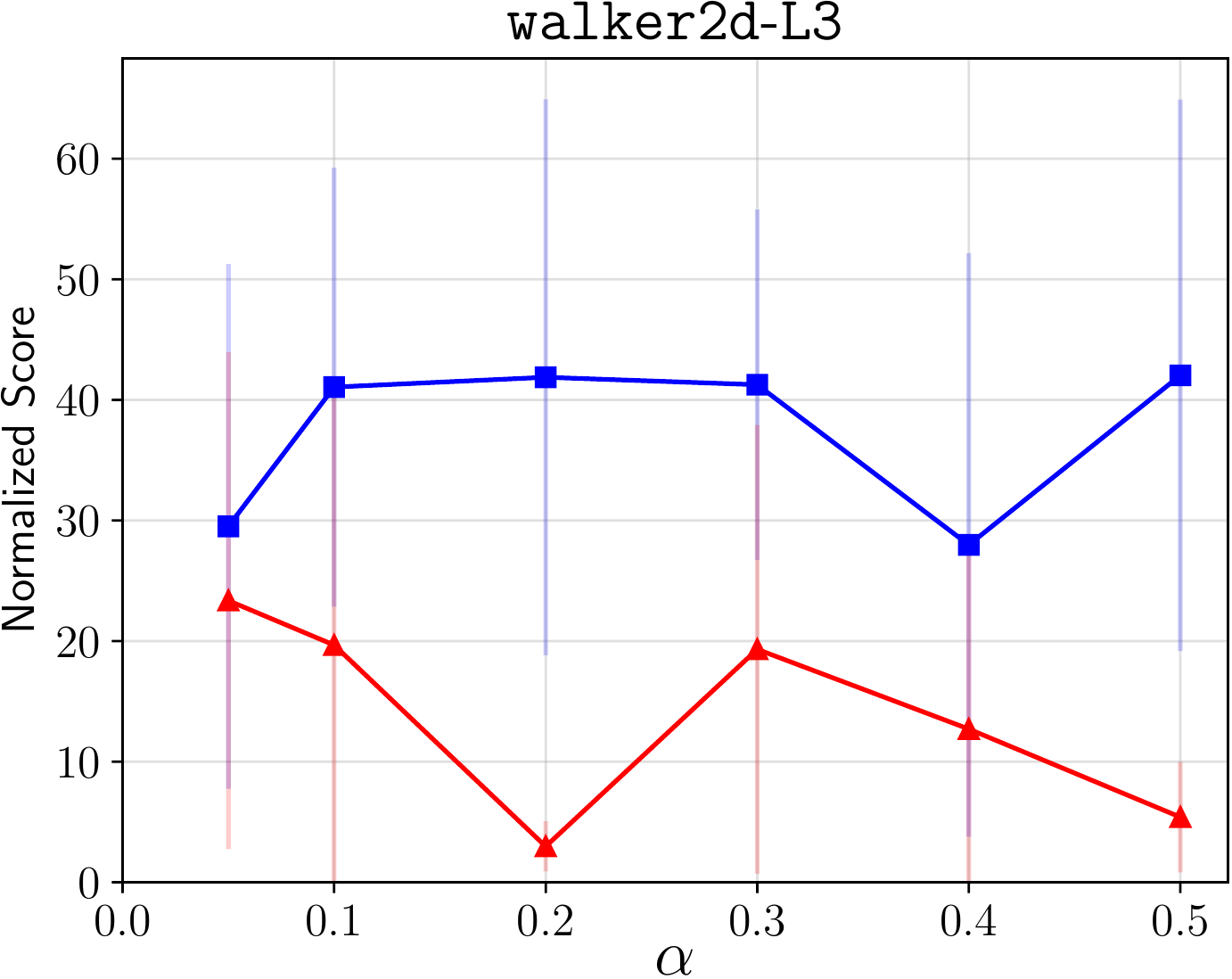}
\includegraphics[width=.245\textwidth]{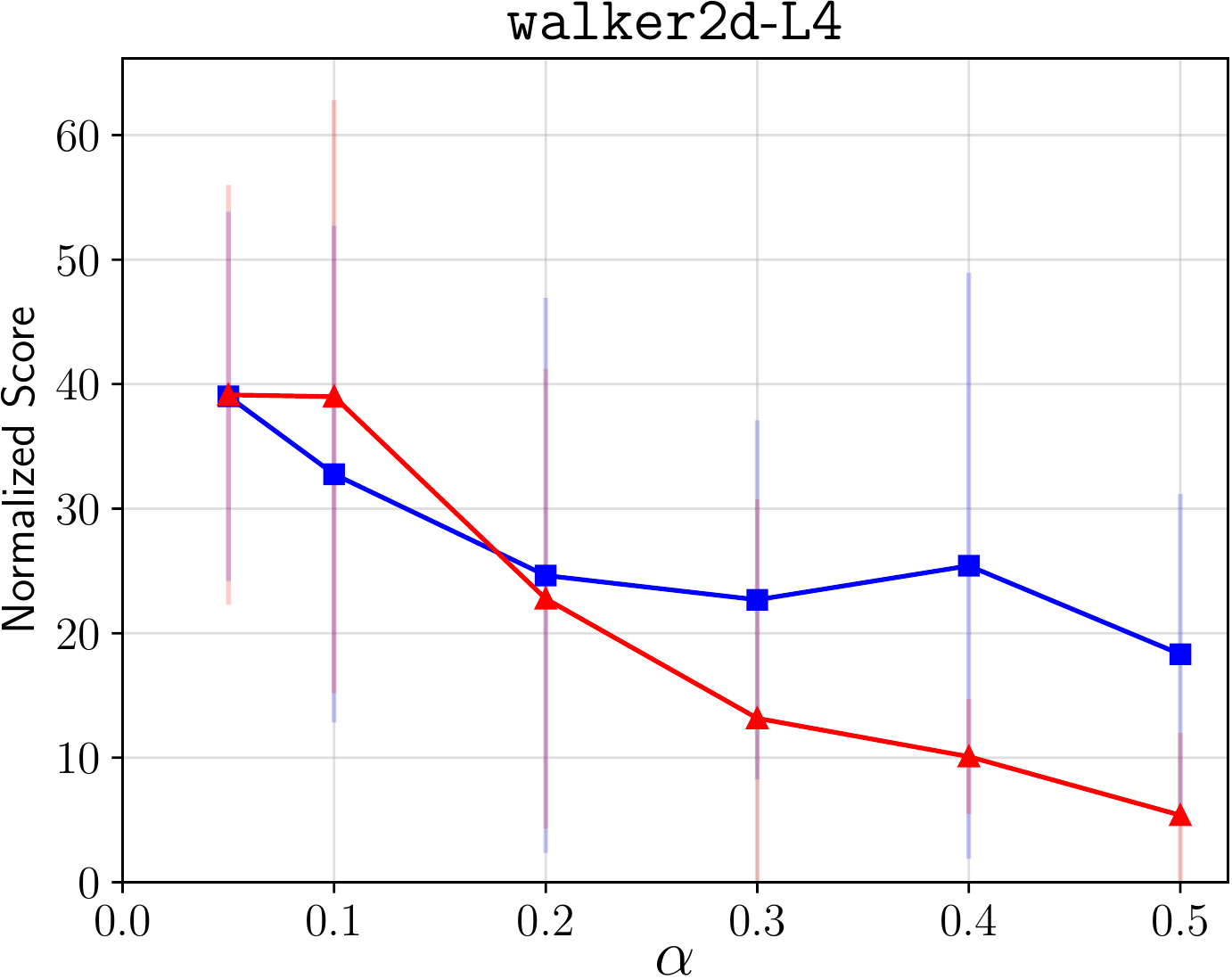}
\caption{\footnotesize Ablation study on the sensitivity of $\alpha$ for \methodshort{} and DemoDICE~\citep{kim2021demodice}. \methodshort{} is much more robust w.r.t. different values of $\alpha$ compared to DemoDICE across different data compositions.}
\label{fig:ablation}
\end{figure*}

\subsection{Task Description}
We consider offline datasets of four MuJoCo~\citep{todorov2012mujoco} locomotion environments (\texttt{hopper}, \texttt{halfcheetah}, \texttt{walker2d} and \texttt{ant}) and two Adroit robotic manipulation environments (\texttt{hammer} and \texttt{relocate}) from the standard offline RL benchmark D4RL~\citep{fu2020d4rl}. 
For each environment, we construct different settings where there is a limited amount of expert demonstrations (denoted as $\gD^E$) and a relatively large collection of suboptimal trajectories (denoted as $\gD^U$) by mixing $N^E$ transitions from expert datasets and $N^R$ transitions from extremely low-quality datasets\footnote{We use \texttt{random-v2} for four MuJoCo locomotion environments and \texttt{cloned-v0} for two Adroit robotic manipulation environments as the extremely low-quality datasets, because performing behavior cloning on the full set of these datasets has a near-zero normalized score (see Table 2 in \citep{fu2020d4rl}).}. 
We denote these settings as \texttt{L1} (Level 1), \texttt{L2} (Level 2), \texttt{L3} (Level 3) and \texttt{L4} (Level 4), where a higher level means a more challenging setting. Note that for Adroit environments, we use three settings \texttt{L1}, \texttt{L2} and \texttt{L3} instead of four due to the complexity of the high-dimensional tasks. It's also worth noting that all the considered tasks here are much more challenging than the settings in \citep{kim2021demodice}, in the sense that these settings are of much more suboptimal data composition (even tasks \texttt{L1} have lower $N^E/N^R$ ratios than the most challenging tasks in \citep{kim2021demodice}).
We summarize the details of these tasks in Table~\ref{tbl:tasks}.

\begin{table*}[ht]
\vspace{-0.2cm}
\begin{center}
\resizebox{\textwidth}{!}{\begin{tabular}{l|l|c|cc}
\toprule
& & Expert Dataset $\gD^E$ & \multicolumn{2}{c}{Suboptimal Dataset $\gD^U$} \\
\textbf{Envs} & \textbf{Tasks} & \# of transitions from \texttt{expert-v2} & \# of transitions from \texttt{expert-v2} & \# of transitions from \texttt{random-v2} \\ \midrule
& \texttt{L1}  & \texttt{1k} & \texttt{200k} & \texttt{1000k}\\
& \texttt{L2}  & \texttt{1k} & \texttt{150k} & \texttt{1000k}\\
\texttt{halfcheetah}& \texttt{L3} & \texttt{1k} & \texttt{100k} & \texttt{1000k}\\
& \texttt{L4} & \texttt{1k} & \texttt{50k} & \texttt{1000k}\\
\midrule
& \texttt{L1} & \texttt{1k} & \texttt{14k} & \texttt{22k}\\
& \texttt{L2} & \texttt{1k} & \texttt{10k} & \texttt{22k} \\
\texttt{hopper}& \texttt{L3} & \texttt{1k} & \texttt{5k} & \texttt{22k}\\
& \texttt{L4} & \texttt{1k} & \texttt{2k} & \texttt{22k}\\
\midrule
& \texttt{L1}  & \texttt{1k} & \texttt{10k} & \texttt{20k}\\
& \texttt{L2}  & \texttt{1k} & \texttt{5k} & \texttt{20k}\\
\texttt{walker2d}& \texttt{L3} & \texttt{1k} & \texttt{3k} & \texttt{20k}\\
& \texttt{L4} & \texttt{1k} & \texttt{2k} & \texttt{20k}\\
\midrule
& \texttt{L1} & \texttt{1k} & \texttt{30k} & \texttt{180k} \\
& \texttt{L2}  & \texttt{1k} & \texttt{20k} & \texttt{180k} \\
\texttt{ant}& \texttt{L3}  &\texttt{1k} & \texttt{10k} & \texttt{180k} \\
& \texttt{L4}  & \texttt{1k} & \texttt{5k} & \texttt{180k} \\
\midrule
\midrule
& & Expert Dataset $\gD^E$ & \multicolumn{2}{c}{Suboptimal Dataset $\gD^U$} \\
\textbf{Envs} & \textbf{Tasks} & \# of transitions from \texttt{expert-v0} & \# of transitions from \texttt{expert-v0} & \# of transitions from \texttt{cloned-v0} \\ \midrule
& \texttt{L1} &\texttt{2k} &\texttt{1000k} &\texttt{1000k}\\
\texttt{hammer}& \texttt{L2} &\texttt{2k} &\texttt{790k} &\texttt{1000k}\\
& \texttt{L3} &\texttt{2k} &\texttt{590k} &\texttt{1000k}\\
\midrule
& \texttt{L1} &\texttt{10k} &\texttt{1000k} &\texttt{1000k}\\
\texttt{relocate}& \texttt{L2} &\texttt{10k} &\texttt{790k} &\texttt{1000k}\\
& \texttt{L3} &\texttt{10k} &\texttt{590k} &\texttt{1000k}\\
\bottomrule
\end{tabular}}
\end{center}
\caption{Dataset statistics for four MuJoCo environments \texttt{halfcheetah, hopper, walker2d} and \texttt{ant} and two Adroit environments \texttt{hammer} and \texttt{relocate} from D4RL~\citep{fu2020d4rl}.}
\normalsize
\label{tbl:tasks}
\end{table*}

\subsection{Evaluation Protocols}
For all the methods except BCND, we run 1M training iterations (gradient steps) and we report the average performance of the last 50k (5\%) steps to capture their asymptotic performance at convergence. For BCND, we follow the implementation in Appendix E.1 in \citep{kim2021demodice}, which has predefined number of iterations according to dataset statistics and we also report the average performance of the last 50k steps for consistent evaluation.

For four MuJoCo locomotion environments \texttt{halfcheetah, hopper, walker2d} and \texttt{ant}, we follow \citep{kim2021demodice} to compute the normalized score as:
$$
\texttt{normalized\_score} = 100 \times \frac{\texttt{score} - \texttt{random\_score}}{\texttt{expert\_score} - \texttt{random\_score}}
$$
where the \texttt{expert\_score} and \texttt{random\_score} corresponds to the average return of trajectories in \texttt{expert-v2} and \texttt{random-v2} respectively.

For two Adroit environments \texttt{hammer} and \texttt{relocate}, we use the recommended reference score in D4RL\footnote{\url{https://github.com/rail-berkeley/d4rl/blob/master/d4rl/infos.py}} to compute the normalized score.

\subsection{Hyperparameters}
\subsubsection{Algorithm Hyperparameters}
We use $\gamma = 0.99$ as the discount factor of the MDP. All methods use a batch size of $256$. The hyperparameters for each of the compared algorithm are summarized below.
\begin{itemize}
    \item BC: $\eta \in \{0.0, 0.5, 1.0\}$.
    \item BC-DRC: $\eta \in \{0.0, 0.5\}$.
    \item BCND: We follow the hyperparameter configurations in Appendix E.1 in \citep{kim2021demodice}.
    \item DemoDICE\footnote{\url{https://github.com/geon-hyeong/imitation-dice}}: $\alpha = 0.05$ across all tasks, as suggested in \citep{kim2021demodice} and verified in our experiments.
    \item \methodshort{}: $\alpha = 0.2$ across all tasks. As shown in Figure~\ref{fig:ablation}, \methodshort{} can achieve a potentially better performance if using different values for different tasks. Nevertheless, we fix $\alpha = 0.2$ for all tasks to demonstrate the robustness of \methodshort{} across different tasks. Moreover, we automatically set $\beta$ to be the running average of the maximum estimated density ratio ($\rhat_\theta = \frac{c_\theta}{1 - c_\theta}$) of each minibatch.
    \item \methodshortdrc{}: We perform a grid search for $\alpha \in \{0.05, 0.1, 0.2, 0.5, 1.0\}$ and $\beta \in \{1.5, 2.0\}$. Note that here we can potentially use a larger regularization strength $\alpha$ because we employ a relaxed support regularization based on the density-ratio-corrected occupancy measure $\rhat \cdot d^U$, which has the potential for better policy improvement. $\beta$ should characterize the upper bound of the density ratio $d^E/ (\rhat_\theta \cdot d^U)$, which we expect to be close to $1$ (e.g.$1.5$ or $2$) since $\rhat_\theta \cdot d^U$ is a density-ratio-corrected occupancy measure. For full reproducibility, we summarize the used hyperparameters in Table~\ref{tbl:relaxdice-drc-hyperparameters}.
\end{itemize}
\begin{table*}[ht]
\begin{center}
\resizebox{.28\textwidth}{!}{
\begin{tabular}{l|l|cc}
\toprule
\textbf{Envs} & \textbf{Tasks} & $\alpha$ & $\beta$ \\ \midrule
& \texttt{L1} & 1.0 & 1.5 \\
& \texttt{L2} & 1.0 & 1.5 \\
\texttt{hopper}& \texttt{L3} & 0.5 & 2.0 \\
& \texttt{L4}  & 0.2 & 1.5 \\
\midrule
& \texttt{L1}  & 1.0 & 1.5 \\
& \texttt{L2}  & 0.5 & 1.5 \\
\texttt{halfcheetah}& \texttt{L3}  & 0.2 & 2.0 \\
& \texttt{L4} & 0.2 & 2.0 \\
\midrule
& \texttt{L1}  & 0.2 & 2.0 \\
& \texttt{L2}  & 0.5 & 2.0 \\
\texttt{walker2d}& \texttt{L3} & 0.1 & 1.5 \\
& \texttt{L4} & 0.05 & 2.0 \\
\midrule
& \texttt{L1}  & 0.1 & 1.5 \\
& \texttt{L2}  & 0.2 & 1.5 \\
\texttt{ant}& \texttt{L3} & 0.5 & 2.0 \\
& \texttt{L4}  & 0.5 & 2.0 \\
\midrule
\midrule
& \texttt{L1} & 0.5 & 1.5 \\
\texttt{hammer}& \texttt{L2} & 0.05 & 1.5 \\
& \texttt{L3} & 0.5 & 2.0 \\
\midrule
& \texttt{L1} & 0.5 & 1.5 \\
\texttt{relocate}& \texttt{L2} & 0.5 & 2.0 \\
& \texttt{L3} & 0.05 & 1.5 \\
\bottomrule
\end{tabular}}
\end{center}
\caption{Hypeparameters used for \methodshortdrc{}.}
\normalsize
\label{tbl:relaxdice-drc-hyperparameters}
\end{table*}

\subsubsection{Implementation Details}
We follow all the other hyperparameters from \cite{kim2021demodice} listed as follows:
\begin{itemize}
    \item Policy network $\pi_\psi$ (for BC, BC-DRC, BCND, DemoDICE, \methodshort{} and \methodshortdrc{}): three-layer MLP with $256$ hidden units, learning rate $3 \times 10^{-5}$.
    \item Lagrange multiplier network $v_\phi$ (for DemoDICE, \methodshort{} and \methodshortdrc{}): three-layer MLP with $256$ hidden units, learning rate $3 \times 10^{-4}$, gradient penalty coefficient $1 \times 10^{-4}$.
    \item Classifier network $c_\theta$ (for DemoDICE, \methodshort{} and \methodshortdrc{}): three-layer MLP with $256$ hidden units, learning rate $3 \times 10^{-4}$, gradient penalty coefficient $10$.
\end{itemize}

\subsection{Computation Resources}

We train \methodshort{} and \methodshortdrc{} on a single NVIDIA GeForce RTX 2080 Ti with 5 random seeds for at most 3 hours for all the tasks.

\subsection{Dataset License}

All datasets in our experiments are from the open-sourced D4RL\footnote{\url{https://github.com/rail-berkeley/d4rl}} benchmark~\citep{fu2020d4rl}. All datasets there are licensed under the Creative Commons Attribution 4.0 License (CC BY).

\end{document}